\DeclareMathOperator*{\im}{im}
\DeclareMathOperator*{\len}{len}
\DeclareMathOperator*{\unif}{unif}
\DeclareMathOperator*{\argmin}{arginf}
\DeclareMathOperator*{\expec}{\mathbf E}
\newcommand*\loss{\ensuremath{\boldsymbol\ell}}
\newcommand{\prob}{{\mathbb P}}
\newcommand{\probq}{{\mathbb Q}}
\newcommand{\iid}{\mathsf{SLT}}
\newcommand{\seq}{\mathsf{SEQ}}
\newcommand{\uni}{\mathsf{UNI}}
\newcommand{\erm}{\mathsf{ERM}}
\newcommand{\sol}{\mathsf{SOL}}
\newcommand{\risk}{{\mathbf R}}
\newcommand{\error}{{\mathbf V}}
\newcommand{\kolmog}{{\mathbf K}}
\newcommand{\kl}{{\mathbf D}}
\newcommand{\turing}{{\mathcal T}}
\newcommand{\theory}{{\mathcal O}}
\newcommand{\ttheory}{\tilde{\mathcal O}}
\newcommand{\hypotheses}{{\mathcal H}}
\newcommand{\radem}{{\mathsf{Radem}}}
\newcommand{\vc}{{\mathsf{vc}}}
\newcommand{\ldim}{{\mathsf{ldim}}}
\newcommand{\information}{\mathsf{Gain}}
\newcommand{\cover}{\mathsf{Cover}}
\newcommand{\falsewt}{\mathbf{F}}
\newcommand{\falsect}{\mathbf{G}}
\newcommand{\bx}{{\mathbf x}}
\newcommand{\bz}{{\mathbf{z}}}
\newcommand{\X}{{\mathcal{X}}}
\newcommand{\Y}{{\mathcal{Y}}}
\newcommand{\bR}{{\mathbb R}}
\newcommand{\indicator}{{\mathbf I}}
\newcommand{\unint}{{\mathbb I}}
\newcommand{\bv}{{\mathbf v}}
\newcommand{\fm}{{\mathfrak m}}
\newcommand{\cS}{{\mathcal S}}
\newcommand{\bX}{{\mathbf X}}
\newtheorem{thm}{Theorem}
\newtheorem{prop}[thm]{Proposition}
\newtheorem{def_prop}[thm]{Definition-Proposition}
\newtheorem{lem}[thm]{Lemma}
\newtheorem{defn}{Definition}
\newtheorem{rem}{Remark}
\newtheorem{eg}{Example}
\newtheorem*{defn_risk}{Definition A}
\newtheorem*{defn_predictive}{Definition B}
\newtheorem*{defn_falsify}{Definition C}
\newtheorem*{thm_main}{Theorem D}
\newtheorem*{thm_main_r}{Theorem D''}
\newtheorem*{cor_main}{Corollary D'}
\newtheorem*{thm_uni}{Theorem E}
\begin{document}

\markboth{D Balduzzi}{
Falsifiable $\implies$ Learnable
}
\title{
Falsifiable $\implies$ Learnable
}
\author{David Balduzzi
\affil{Victoria University of Wellington}}

\begin{abstract}
	The paper demonstrates that falsifiability is fundamental to learning. We prove the following theorem for statistical learning and sequential prediction: If a theory is falsifiable then it is learnable -- i.e. admits a strategy that predicts optimally. An analogous result is shown for universal induction. 

	\begin{quote}
		\emph{A theory that explains everything, [predicts] nothing.} -- attributed to Karl Popper.
	\end{quote}
\end{abstract}

\maketitle

\terms{Learning, Generalization, Semantics}
\keywords{Falsification, empirical risk}
\acmformat{David Balduzzi, 2014. Falsifiable $\implies$ Learnable.}

\setcounter{section}{-1}

\section{Introduction}

To what extent are theory-based predictions justified by prior observations? The question is known as the problem of induction and is fundamental to scientific inference. We address the problem of induction from the perspective of learning theory. That is, we consider which theories, and under what assumptions, can be applied to make optimal predictions. 

Our main result is that the more hypotheses a theory falsifies, suitably quantified, the closer the predictive performance of the best strategy (based on the theory) will be to the theory's \emph{post hoc} explanatory performance on observed data.

\setcounter{subsection}{-1}
\subsection{Non-technical overview (or, Learning theory for the working scientist)}

Learning theorists have characterized the generalization performance of algorithms in a wide range of scenarios. Although none of these scenarios adequately captures the practice of scientific inference, they form a family of minimal models of prediction. 

An intuitive understanding of the main results of learning theory therefore belongs in every scientist's conceptual toolkit. Unfortunately, the results are phrased in opaque terminology that depends on specialized concepts such as Rademacher complexity, shattering coefficients and VC-dimensions. 

This paper presents basic results from learning theory in terminology that is meaningful to the broader scientific community.

The results cover three scenarios. In each scenario, Forecaster uses a theory (or theories) to predict Nature's next move(s) based on Nature's previous moves. 
\begin{enumerate}[S1.]
	\setcounter{enumi}{1}
	\item \emph{Statistical learning} ($\iid$). 
	Forecaster aims to predict events sampled from an unknown probability distribution based on a finite sample \cite{vapnik:95,boucheron:00,bousquet:04}.
	\item \emph{Sequential prediction} ($\seq$).
	Forecaster aims to predict events generated by an adversarial Nature that adapts to Forecaster's previous moves \cite{cesa:06,abernethy:09,rakhlin:14a}.
	\item \emph{Universal induction} ($\uni$).
	Forecaster aims to predict elements drawn from an arbitrarily chosen computable sequence \cite{solomonoff:64,hutter:11}.
\end{enumerate}

The paper develops the following account. 

\begin{enumerate}[A.]
	\item \emph{The risk.}
	\begin{itemize}
		\item The \textbf{risk of a theory} is how accurately it \emph{explains} a sequence of events.
	\end{itemize}
	A theory explains a sequence of events perfectly if it contains a predictor that correctly labels every element. In general, the accuracy of an explanation is the fraction of the sequence that its best predictor explains correctly.
	\begin{itemize}
		\item The \textbf{risk of a strategy} is how accurately it \emph{predicts} a sequence of events.
	\end{itemize}
	A strategy specifies picks a predictor based on previously observed events, which it then applies to future events. The strategy's predictive accuracy is the fraction of future events that it labels correctly.\\
	
	\item \emph{Learnability.}
	\begin{itemize}
		\item The \textbf{predictive risk} (or \textbf{regret}) on a sequence is the difference between a strategy's \emph{predictive} accuracy and the theory's \emph{explanatory} accuracy:
		\begin{equation*}
			\big\{\text{predictive risk}\big\} = \big\{\textrm{how well strategy predicts}\} - \big\{\textrm{how well theory explains}\big\}
		\end{equation*}
	\end{itemize}
	The predictive risk measures the strategy's effectiveness. It is not an absolute measure. Effectiveness is relative to a baseline -- how well the theory explains the sequence in hindsight. Thus, the predictive risk quantifies the cost from not knowing what Nature will do next, independently of the cost of not having a good model of Nature.
	\begin{itemize}
		\item A \textbf{strategy is optimal} if its predictive risk is asymptotically negligible on any sequence:
		\begin{equation*}
			\big\{\textrm{strategy optimal}\big\}
			\quad\textrm{if}\quad
			\left[\lim_{n\rightarrow \infty}\big\{\textrm{predictive risk}\big\}=0\right]
		\end{equation*}
	\end{itemize}
	The definition of optimal is subtle. An optimal strategy does not necessarily predict accurately. Rather, it predicts about as accurately as the theory explains. 
	\begin{itemize}
		\item A \textbf{theory is learnable} if it admits an optimal strategy:
		\begin{equation*}
			\big\{\textrm{theory learnable}\big\}
			\quad\textrm{if}\quad
			\exists\big\{\textrm{optimal strategy}\big\}
		\end{equation*}
	\end{itemize}
	In other words, a theory is learnable if it admits a strategy that predicts future events as well as the theory explains them after the fact.\\

	\item \emph{Falsifiability.}
	\begin{itemize}
		\item The \textbf{falsifiability of a theory} is the fraction of effective hypotheses about a sequence that it cannot explain. 
	\end{itemize}
	Effective hypotheses are hypotheses about finite sequences. The set of effective hypotheses is necessarily finite. We measure falsifiability in two ways, soft and hard:
	\begin{align*}
		\falsewt &
		:= 2\sum_{\epsilon \in \unint}  \Big(\text{\small{fraction of effective hypotheses falsified}}\Big) 
		\cdot \Big(\text{\small{on fraction $\epsilon$ of data}}\Big) \\
		\falsect & 
		:= \frac{\log\text{-\# of effective hypotheses that theory falsifies}}{\log \textrm{\# of effective hypotheses}}
	\end{align*}
	The two notions are, respectively, the expectation of a risk-induced distribution on errors and the risk's Bayesian information gain, see section~\ref{s:slt_false}. They are closely related to the statistical and sequential Rademacher complexities and covering numbers, and Kolmogorov complexity.
	\begin{itemize}
		\item A \textbf{theory is falsifiable} if the fraction of effective hypotheses that it falsifies tends to one asymptotically.
		\begin{equation*}
			\big\{\textrm{theory falsifiable}\big\}
			\quad\textrm{if}\quad
			\left[\lim_{n\rightarrow \infty}
			\big\{\textrm{falsifiability}\big\}=1
			\right]
		\end{equation*}
	\end{itemize}
	The number of effective hypotheses grows exponentially with sequence length, so the requirement is quite weak. For example, a theory is falsifiable if the number of hypotheses it explains grows polynomially.\\

	\item \emph{Falsifiable $\implies$ Learnable} ($\iid, \seq$).
	\begin{itemize}
		 \item \textbf{Main theorem (qualitative).}
	 	If a theory is falsifiable, then it is learnable:
	 	\begin{equation*}
	 		\big\{\textrm{falsifiable}\}\implies\big\{\textrm{learnable}\big\}
	 	\end{equation*}
	\end{itemize}
	Alternatively, if a theory is falsifiable then it admits a strategy that predicts optimally -- that is, a strategy that predicts any sequence as well, asymptotically, as the theory would have explained the sequence in hindsight.
	\begin{itemize}
		\item \textbf{Main theorem (quantitative).}
		\begin{equation*}
	 		\big\{\textrm{predictive risk}\big\} \leq 1 - \big\{\textrm{falsifiability}\big\}	
		\end{equation*}
	\end{itemize}
	The quantitative version of the main theorem provides guarantees -- across all sequences of some finite length $n$ -- on the expected performance of a theory's best strategy in terms of the falsifiability of the theory. The qualitative version is a corollary of the quantitative.\\

	\item \emph{Falsifiable $\implies$ Learnable} ($\uni$).\\
	Universal induction differs significantly from the other two scenarios. We reformulate Solomonoff induction to show that Forecaster constructs a nested sequence of theories in response to observations; from which predictors are drawn uniformly at random. Falsifiability is defined as above in this setting, but it admits a different interpretation:
	\begin{equation*}
		\big\{\text{falsifiability}\big\} = \big\{\log\textrm{-\# hypotheses Forecaster eliminates whilst adapting theory}\big\}	
	\end{equation*}
	Importantly, Forecaster eliminates hypotheses prior to -- and separately from -- making predictions.
	\begin{itemize}
	\item \textbf{Main theorem (quantitative).}
		\begin{equation*}
			\big\{\textrm{predictive risk}\big\} \leq \big\{\text{falsifiability}\big\}
		\end{equation*}		
	\end{itemize}
	In short, the number of hypotheses eliminated (or falsified) by Forecaster whilst adapting its theory controls its predictive performance.	
\end{enumerate}

\subsection{Outline of the paper and summary of the main contributions}

The paper is organized as follows. Section~\ref{s:represent} introduces two basic tools: the induced distribution and the Bayesian information gain. When a function has a finite domain, a natural prior on the domain is the uniform distribution, in which case the induced distribution and information gain can be interpreted as different ways of counting elements in pre-images.

The next three sections consider statistical learning, sequential prediction and universal induction in turn. The sections are variations on a basic template. 

The risk is the fundamental object in all three cases, Definition~A in sections~$x.1$ for $x=2,3,4$. The risk is a function from sequences of events to errors that can be computed with respect to strategies or theories. In the first case, the risk quantifies predictive performance of the strategy; in the second, it quantifies explanatory performance of the theory in hindsight. The predictive risk is the (minimax) difference between predictive and explanatory performance, Definition~B in sections~$x.2$.

An event is an ordered pair: a process acting on an input. The key step in the paper is to  reformulate the risk as a function from hypothetical processes to errors, by fixing the input sequence. The risk is then a function with a finite domain.

We propose two notions of falsifiability,\footnote{Only hard falsifiability is relevant to universal prediction.} Definition~C in sections~$x.3$. The first, soft falsifiability is the expected error under the risk-induced distribution on errors. Intuitively, it is a weighted sum of how many potential hypotheses are falsified over different fractions of the data. The second, hard falsifiability, is the risk's Bayesian information gain. Intuitively, it is the ``log-fraction'' of falsified hypotheses.

The main result is that soft and hard falsifiability control the predictive risk in all three scenarios, Theorems~D~\&~E in sections~$x.4$. Specifically, we show that falsifiability is equivalent to, or upper or lower bounds, the relevant measures of capacity: the statistical and sequential Rademacher complexities and covering numbers, and Kolmogorov complexity. The bounds on predictive risk then follow from standard results in learning theory \cite{boucheron:00,bousquet:04,hutter:11,rakhlin:14a}. Proofs are collected in sections~$x.5$.

The conclusion discusses the results' implications for Popper's account of scientific inference and the problem of induction, section~\ref{s:discussion}.

The main contributions are:
\begin{itemize}
	\item Relating the formal models of prediction developed by learning theorists to how working scientists think about scientific inference.
	\item Deriving falsifiability, and so the fundamental measures of capacity and complexity, as natural properties of the optimization problem at hand (the risk, Remark~\ref{r:opt}).
	\item Unifying basic notions from information theory, learning theory, and algorithmic complexity under the rubric of falsifiability.
\end{itemize}
The simplicity of the definitions and resulting theorems -- along with the fact that they apply across diverse settings -- suggest that falsifiability may be a more natural, flexible concept than capacity.

\subsection{Related work}

Connections between falsifiability and statistical learning theory were pointed out in \cite{vapnik:95,harman:07,corfield:09}. However, these works only considered VC dimension, which does not relate to falsifiability as directly as the measures introduced here. Moreover, they only considered the setting of statistical learning. 

Preliminary versions of this work were presented in \cite{balduzzi:11ilf,balduzzi:11ffp}.

\subsection{Notation}

We have endeavored to use similar notation for the three settings. Consequently, we have been forced to overload certain symbols. In particular, superscripts can refer to both Cartesian products, e.g. $X^n=\prod_{t=1}^n X$, and disjoint unions, e.g. $Y^\bullet=\bigcup_{n=1}^\infty Y^n$. 

\begin{center}
\begin{tabular*}{.8\textwidth}{|l|l||l|l|}
	\hline
	indicator function          & $\indicator$        	     &
	unit interval [0,1]         & $\unint$            	     \\
	0/1 loss                    & $\loss$             	     &
	set of distributions on $X$ & $\Delta_X$          	     \\
	expectation                 & $\expec$            	     &
	probability distribution    & $\prob$ or $\probq$ 	     \\
	risk                        & $\risk$             	     &
	Bayesian information gain   & $\information$      	     \\
	predictive risk (regret)    & $\error$            	     &
	Rademacher complexity       & $\radem$            	     \\
	soft falsifiability         & $\falsewt$          	     &
	covering number             & $\cover$            	     \\
	hard falsifiability         & $\falsect$          	     &
	VC-dimension                & $\vc$                	     \\
	set of hypotheses           & $\hypotheses$       	     &
	Littlestone dimension       & $\ldim$              	     \\
	theory                      & $\theory$           	     &
	Turing machine              & $\turing$           	     \\
	\hline
\end{tabular*}
\end{center}
We restrict to binary classification in this paper.

\section{The Bayesian information gain and the induced distribution}
\label{s:represent}

This section presents Bayesian information gain and the induced distribution. They will be used to quantify falsifiability in sections $x.3$.

Suppose that $X$ is a finite set, and that we are given a conditional distribution $\prob_\fm(y|x)$ and a prior $\prob_X$ on $X$. The conditional distribution models a noisy channel $\fm$ connecting $X$ to $Y$. 

\begin{defn}[Bayesian information gain; induced distribution]
	\label{d:prob_ei}
	The \textbf{Bayesian information gain} when $\fm$ outputs $y$ is
	\begin{equation*}
		\label{e:p_ei}
		\information\big(\fm,y,\prob_X\big) := \kl\Big[\prob_\fm(X|y)\,\Big\|\, \prob_X(X)\Big],
	\end{equation*}
	where $\kl[\prob\,\|\,\probq] := \sum_{x\in X} \prob(x)\log\frac{\prob(x)}{\probq(x)}$ is the Kullback-Leibler divergence. The posterior $\prob_\fm(x|y)$ is computed via Bayes' rule
	\begin{equation*}
		\label{e:actual}
		\prob_\fm(x|y) = \prob_\fm(y|x)\cdot \frac{\prob_X(x)}{\prob_\fm(y)},
	\end{equation*}
	where $\prob_\fm(y)=\sum_{x\in X} \prob_X(x)\prob_\fm(y|x)$ is the $\fm$-\textbf{induced distribution} on $Y$.
\end{defn}
The Bayesian information gain quantifies how much observing $y$ reduces uncertainty about $X$. We remark that
\begin{prop}
	The mutual information communicated across $\fm$ is the expected information gain
	\begin{equation*}
		I_\fm(X,Y) = \expec_{y\sim \prob_\fm(Y)} \information\big(\fm,y, \prob_X\big),
	\end{equation*}	
	where the expectation is with respect to the $\fm$-induced distribution on $Y$.
\end{prop}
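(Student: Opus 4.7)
The plan is a direct expansion of both sides and a verification that they coincide; this is a textbook identity, so the main work is bookkeeping rather than overcoming any conceptual hurdle.

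First I would unfold the right-hand side using the definition of the Bayesian information gain from Definition~\ref{d:prob_ei}:
\begin{equation*}
\expec_{y\sim \prob_\fm(Y)} \information(\fm,y,\prob_X)
= \sum_{y\in Y} \prob_\fm(y)\,\kl\bigl[\prob_\fm(X|y)\,\bigl\|\,\prob_X(X)\bigr]
= \sum_{y\in Y} \prob_\fm(y) \sum_{x\in X} \prob_\fm(x|y)\log\frac{\prob_\fm(x|y)}{\prob_X(x)}.
\end{equation*}
Next I would absorb the outer weight using the joint distribution $\prob_\fm(x,y)=\prob_\fm(y)\prob_\fm(x|y)=\prob_X(x)\prob_\fm(y|x)$, which is the content of Bayes' rule stated in the definition.

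Then I would rewrite the log-ratio by applying Bayes' rule once more, $\prob_\fm(x|y)/\prob_X(x) = \prob_\fm(y|x)/\prob_\fm(y) = \prob_\fm(x,y)/\bigl(\prob_X(x)\prob_\fm(y)\bigr)$, giving
\begin{equation*}
\sum_{x,y} \prob_\fm(x,y)\log\frac{\prob_\fm(x,y)}{\prob_X(x)\prob_\fm(y)},
\end{equation*}
which is precisely the standard definition of the mutual information $I_\fm(X,Y)$. This closes the identity.

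The argument has no real obstacle: every manipulation is an application of Bayes' rule or a reindexing of a finite double sum (finiteness of $X$, and the implicit finiteness of the effective support of $\prob_\fm(Y)$, ensures that all sums converge absolutely and can be rearranged). The only subtlety worth flagging is to restrict the summation to the support of $\prob_\fm(y)$ so that the conditional $\prob_\fm(x|y)$ is well-defined; outside the support the outer weight vanishes and contributes nothing.
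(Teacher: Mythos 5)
Your proof is correct: the paper states this proposition without proof (it is a standard identity), and your expansion of the expected Kullback--Leibler divergence via Bayes' rule into the usual double-sum form of mutual information is exactly the intended textbook argument, with the support caveat handled appropriately.
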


\begin{rem}[uniform priors on finite sets]\label{r:uniform}
	Unless otherwise specified, finite sets are given the uniform prior: $\prob_{\unif}(x)=\frac{1}{|X|}$. We write $\information(\fm,y)$ as a shorthand for $\information(\fm,y, \prob_{\unif})$.
\end{rem}

Given a function $f:X\rightarrow Y$, define the corresponding conditional distribution
\begin{equation*}
	\prob_f(y|x) = \begin{cases}
		1 & \text{if } y = f(x) \\
		0 & \text{else.}
	\end{cases}
\end{equation*}

\begin{lem}\label{t:finite_det}
	Given a function $f:X\rightarrow Y$, the $f$-induced distribution on $Y$ is
	\begin{equation*}
		\prob_f(y) = \begin{cases}
			\frac{|f^{-1}(y)|}{|X|} & \text{if }y\in \im(f) \\
			0 & \text{else.}
		\end{cases}		
	\end{equation*}	
	The Bayesian information gain is 
	\begin{equation*}
		\information(f,y) = \begin{cases}
			-\log \prob_f(y) & \text{if }y\in\im(f)\\
			\text{undefined} & \text{else.}
		\end{cases}
	\end{equation*}
\end{lem}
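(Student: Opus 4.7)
The plan is to unfold the definitions directly, since with a uniform prior and a deterministic channel the computations are essentially bookkeeping about preimages.

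First I would compute the induced distribution. Plugging $\prob_f(y|x) = \indicator[f(x) = y]$ and $\prob_X(x) = 1/|X|$ into the definition $\prob_f(y) = \sum_{x \in X} \prob_X(x) \prob_f(y|x)$ gives
\[
\prob_f(y) \;=\; \frac{1}{|X|} \sum_{x \in X} \indicator[f(x) = y] \;=\; \frac{|f^{-1}(y)|}{|X|}.
\]
When $y \notin \im(f)$ the preimage is empty, so $\prob_f(y) = 0$, which proves the first claim.

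Next I would identify the posterior. By Bayes' rule, whenever $y \in \im(f)$ (so $\prob_f(y) > 0$),
\[
\prob_f(x|y) \;=\; \indicator[f(x)=y] \cdot \frac{1/|X|}{|f^{-1}(y)|/|X|} \;=\; \frac{\indicator[f(x)=y]}{|f^{-1}(y)|},
\]
i.e.\ the posterior is the uniform distribution on the preimage $f^{-1}(y)$. Substituting this and the uniform prior into the KL divergence gives a single nonzero term value $\log\bigl(|X|/|f^{-1}(y)|\bigr)$ on each $x \in f^{-1}(y)$, and summing against the uniform posterior leaves
\[
\information(f,y) \;=\; \log\frac{|X|}{|f^{-1}(y)|} \;=\; -\log \prob_f(y),
\]
as claimed. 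When $y \notin \im(f)$, Bayes' rule divides by $\prob_f(y)=0$, so the posterior and hence $\information(f,y)$ are undefined.

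There is no real obstacle; the only thing to be careful about is the $y \notin \im(f)$ case, which one handles by observing that the denominator in Bayes' rule vanishes, consistent with $-\log 0$ being undefined. The lemma is essentially the statement that, under a uniform prior, the information gained from observing the output of a deterministic function is the log of the reciprocal of the fraction of inputs mapping to that output.
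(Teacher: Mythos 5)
Your proof is correct and is exactly the routine unfolding of Definition~\ref{d:prob_ei} that the paper relies on (the paper states this lemma without proof, treating it as immediate): the induced distribution counts the preimage, the posterior is uniform on $f^{-1}(y)$, and the KL divergence collapses to $\log\bigl(|X|/|f^{-1}(y)|\bigr) = -\log\prob_f(y)$. Your handling of the $y\notin\im(f)$ case is also the right observation.
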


\begin{lem}\label{t:uninformative}
	The information gain is zero, $\information(f,y)=0$, if and only if $f(x)=y$ for all $x\in X$.
\end{lem}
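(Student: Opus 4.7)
The plan is to reduce the claim directly to Lemma~\ref{t:finite_det}, which already computes both $\prob_f(y)$ and $\information(f,y)$ under the uniform prior assumed by Remark~\ref{r:uniform}. Since $\information(f,y)$ is only defined when $y \in \im(f)$, and both directions of the biconditional require $y$ to be attained by $f$ (if $f(x)=y$ for all $x$, then trivially $y \in \im(f)$), I may work entirely within the case $y \in \im(f)$.

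First I would handle the ``if'' direction. Suppose $f(x)=y$ for every $x \in X$. Then $f^{-1}(y) = X$, so by Lemma~\ref{t:finite_det}, $\prob_f(y) = |X|/|X| = 1$, and hence $\information(f,y) = -\log 1 = 0$.

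For the converse, suppose $\information(f,y) = 0$. By Lemma~\ref{t:finite_det}, $y$ must lie in $\im(f)$ and $\information(f,y) = -\log \prob_f(y) = 0$, so $\prob_f(y) = 1$. Again by Lemma~\ref{t:finite_det}, $\prob_f(y) = |f^{-1}(y)|/|X| = 1$ forces $|f^{-1}(y)| = |X|$, i.e.\ $f^{-1}(y) = X$, which is exactly the statement that $f(x)=y$ for all $x \in X$.

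The argument is entirely bookkeeping on top of Lemma~\ref{t:finite_det}, so there is no real obstacle; the only subtlety worth flagging in the write-up is to record that the uniform prior of Remark~\ref{r:uniform} is in force (otherwise $\information(f,y)=0$ would only force $\prob_f(y)=1$, which under a non-uniform prior need not entail that $f$ is constant).
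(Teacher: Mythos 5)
Your proof is correct and matches the intended argument: the paper states this lemma without proof as an immediate consequence of Lemma~\ref{t:finite_det} under the uniform prior of Remark~\ref{r:uniform}, which is exactly the reduction you carry out (including the correct observation that $\prob_f(y)=1$ forces $f^{-1}(y)=X$ because $X$ is finite). Your flag about the uniform prior being essential is a sensible remark and consistent with the paper's conventions.
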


\section{Statistical learning}
\label{s:slt}

Statistical learning is concerned with inductive inference under the assumption that observations are drawn independently from an unknown, but fixed, probability distribution.

This section introduces falsifiability in detail. The later sections on sequential prediction and universal induction rely in part on the presentation developed here. 

\setcounter{subsection}{-1}
\subsection{Setup}

Let $X$ be an arbitrary set and $Y=\{0,1\}$. Let $Z=X\times Y$. A datum $z=(x,y)$ in $Z$ consists of an input $x$ and an outcome or label $y$. A process is a map $\sigma:X\rightarrow Y$ from inputs to outcomes. The hypothesis space $\hypotheses:= Y^X=\{\sigma:X\rightarrow Y\}$ is the set of all processes. Finally, an \emph{event} $(x,\sigma)$ is an element of $X\times \hypotheses$.

A \emph{theory} is a set of hypotheses, $\theory\subset \hypotheses$. Elements of the theory are referred to as predictors. Of course, by definition a predictor is also a hypothesis.

Let $\loss:\theory\times X\times Y\rightarrow \unint$ denote the 0/1 loss:
\begin{equation*}
	\loss(f,x,y) = \indicator[f(x)\neq y] = \begin{cases}
		0 & \text{if }f(x)=y \\
		1 & \text{else.}
	\end{cases}	
\end{equation*}
Predictor $f$ \emph{explains}\footnote{Clearly, we are using `explain' in a very weak, technical sense.} datum $(x,y)$ if $\loss(f,x,y)=0$. If not, then $(x,y)$ falsifies $f$.

\subsection{The risk ($\iid$)}

We assume throughout this section that the sample $\vec{x}$ contains $n$ distinct points.

Let $X^\bullet = \bigcup_{t=1}^\infty X^t$ denote the set of finite sequences of elements of $X$. We typically refer to sequences $\vec{x}=(x_1,\ldots,x_n)$ rather than sets $\{x_1,\ldots,x_n\}$ to keep notation and terminology consistent across sections.

\begin{defn_risk}[risk, $\iid$]
	The \textbf{risk} of theory $\theory$ on sequences of events is
	\begin{equation*}
		\risk^\iid_{\theory}:
		\hypotheses\times X^\bullet\rightarrow\unint:
		(\sigma,\vec{x})
		\mapsto 
		\inf_{f\in\theory}\frac{1}{n}\sum_{t=1}^n
		\loss\big(f,x_t,\sigma(x_t)\big),		
	\end{equation*}
	where $n=\len(\vec{x})$.
	The risk on distributions on data is
	\begin{equation*}
		\risk^\iid_{\theory}:
		\Delta_Z\rightarrow\unint:
		\prob_Z
		\mapsto 
		\inf_{f\in\theory}\expec_{z\sim\prob_Z}
		\loss\big(f,z\big).
	\end{equation*}
\end{defn_risk}
The risk quantifies the fraction of events that the best predictor in $\theory$ labels incorrectly -- that is, the fraction of events that the theory cannot explain:
\begin{equation*}
	\risk_\theory:\big\{\textrm{sequence of events}\big\}
	\mapsto \big\{\textrm{fraction of sequence that $\theory$ cannot explain}\big\}.
\end{equation*}
The risk is zero if and only if there is a predictor in $\theory$ that explains the entire sequence of events perfectly.

The set of hypotheses is not finite in general. However, since datasets are always finite, it turns out that the \emph{effective} set of hypotheses is finite.

\begin{defn}[effective hypotheses]
	Given a sequence $\vec{x}=(x_1,\ldots,x_n)$ of inputs, we say that two hypotheses $\sigma_1$ and $\sigma_2$ in $\hypotheses$ are equivalent
	\begin{equation*}
		\sigma_1\sim \sigma_2\textrm{ if and only if }\sigma_1(x_t) = \sigma_2(x_t)\textrm{ for all } t\in\{1,\ldots,n\}.
	\end{equation*}
	We refer to an equivalence class $[\sigma]=\{\tau\in\hypotheses\,|\,\sigma\sim\tau\}$ of hypotheses as an \textbf{effective hypothesis} and let $\hypotheses_{ef}=\{[\sigma]\,|\,\sigma\in\hypotheses\}$ denote the set of effective hypotheses. 
\end{defn}
Since $\vec{x}$ contains $n$ elements, it follows that there is a finite number ($2^n$) of effective hypotheses. 

Two hypotheses in the same equivalence class are indistinguishable on the observed data, and thus indistinguishable to the risk. Given a sequence of $n$ inputs $\vec{x}$, the risk can be written as a function taking effective hypotheses about $\vec{x}$ to errors:
\begin{equation*}
	\risk^\iid_{\theory,\vec{x}}:\hypotheses_{ef}\rightarrow\unint:[\sigma]\mapsto \inf_{f\in\theory}\frac{1}{n}\sum_{t=1}^n\loss\big(f,x_t,\sigma(x_t)\big).
	\tag{A}
\end{equation*}
Formulated in this way, the risk quantifies how well theory $\theory$ explains the action of an hypothetical process $\sigma$ on input sequence $\vec{x}$. More precisely, the risk $\epsilon = \risk_{\theory,\vec{x}}(\sigma)$ is the fraction of the inputs that the best predictor $f$ in $\theory$ misclassifies when labels are generated by $\sigma$.

\subsection{Learnability ($\iid$)}
\label{s:slt_learn}

A theory is learnable if it admits a strategy whose predictions match the theory's best \emph{post hoc} explanation. 

A strategy specifies the predictor that Forecaster will deploy in future as a function of previous events. Formally, a \emph{strategy} is a function taking a finite dataset $\vec{z}=(z_1,\ldots, z_n)\in Z^n$ to a predictor in $\theory$. Let $\Psi_n = \{Z^n\rightarrow \theory\}$ denote the set of strategies on datasets of size $n$. 

\begin{eg}[empirical risk minimization]
	A basic strategy is empirical risk minimization ($\erm$), which outputs the predictor that minimizes the training error:
	\begin{equation*}
		\psi_{\erm}:Z^n\rightarrow \theory:(z_1,\ldots, z_n)\mapsto \argmin_{f\in\theory} \frac{1}{n}\sum_{t=1}^n\loss(	f,z_t).
	\end{equation*}
\end{eg}

Following \cite{abernethy:09}, we formulate learnability via a game played between Forecaster and Nature. Forecaster picks a strategy $\psi\in\Psi^n$. Nature observes Forecaster's strategy and responds by choosing a distribution $\prob_Z\in\Delta_Z$ on events.

The value of the game is the generalization error of Forecaster's strategy on Nature's probability distribution: the difference between the predictive errors Forecaster's strategy accumulates and the explanatory errors of the \emph{theory's best predictor, judged after observing the distribution.} Formally, the value of the game is the difference between the risk $\risk_{\{\psi(\vec{z})\}}(\prob_Z)$ of the strategy $\psi(\vec{z})$ and the risk $\risk_{\theory}(\prob_Z)$ of the entire theory $\theory$. 

Forecaster aims to minimize the value; Nature aims for the opposite. The minimax value is thus
\begin{equation*}
	\error^\iid_n(\theory) 
	:= \inf_{\psi\in\Psi^n} \sup_{\prob_Z\in\Delta_Z}
	\underbrace{\Big[\expec_{\vec{z}\sim \prob_Z}\expec_{z'\sim \prob_Z}\loss\big(\psi(\vec{z}),z'\big) - \inf_{f\in\theory}\expec_{z'\sim \prob_Z}\loss(f,z')\Big]}_{\text{expected worst-case generalization error of Forecaster's best strategy}}
\end{equation*}
More concisely,
\begin{defn_predictive}[predictive risk, learnability; $\iid$]
	The minimax value of the game, or the \textbf{predictive risk} of theory $\theory$ on datasets of size $n$ is
	\begin{equation}
		\error^\iid_n(\theory) 
		= 
		\underbrace{\inf_{\psi\in\Psi^n}}_{\textrm{Forecaster's best strategy}}
		\overbrace{\sup_{\prob_Z\in\Delta_Z}}^{\textrm{Nature's worst distribution}}
		\underbrace{\Big[\expec_{\vec{z} \sim \prob_Z}\risk^\iid_{\psi(\vec{z})}(\prob_Z) - \risk^\iid_\theory(\prob_Z)\Big]}_{\textrm{strategy's generalization error on $\prob_Z$}}.
		\tag{B}
	\end{equation}
	the generalization error of Forecaster's best strategy when exposed to Nature's worst (for Forecaster) sequence of events.

	Theory $\theory$ is \textbf{learnable} if $\lim_{n\rightarrow\infty}\error_n(\theory)=0$.
\end{defn_predictive}
The predictive risk is the cost to Forecaster of not knowing what Nature will do next. It is measured against a baseline: Forecaster's best explanation of the entire sequence. The predictive risk thus separates the costs incurred due to predicting from the costs incurred due to having a theory that does not fit reality perfectly.

If theory $\theory$ is learnable then, for large $n$, the cumulative cost to Forecaster of not knowing what Nature will do next is negligible.  

Importantly, the predictive risk says nothing about the \emph{absolute} performance of Forecaster's strategy. A theory may have low predictive risk and still predict a particular sequence of events badly since the baseline -- the cost of using a theory that does not fit reality -- is subtracted.

\subsection{Falsifiability ($\iid$)}
\label{s:slt_false}

A theory is falsifiable to the extent that there are hypotheses that it \emph{cannot} explain. We quantify falsifiability in two ways.
\begin{defn_falsify}[falsifiability, $\iid$]
	Let $\probq_{\theory,\vec{x}}$ denote the $\risk^\iid_{\theory,\vec{x}}$-induced distribution on $\unint$. The \textbf{soft falsifiability} of $\theory$ on $\vec{x}$ is the expected error
	\begin{equation}
		\falsewt^\iid_n(\theory|\vec{x}) 
		:= 2\expec_{\epsilon\sim \probq_{\theory,\vec{x}}}[\epsilon]
		\quad\textrm{and}\quad
		\falsewt^\iid_n(\theory)
		:= \inf_{\vec{x}\in X^n}\falsewt^\iid_n(\theory|\vec{x}).
		\tag{C-s}
	\end{equation}
	The \textbf{hard falsifiability} of $\theory$ on $\vec{x}$ is
	\begin{equation}
		\falsect^\iid_n(\theory|\vec{x}) 
		:= \frac{1}{n}\information\Big(\risk^\iid_{\theory,\vec{x}},0\Big)
		\quad\textrm{and}\quad
		\falsect^\iid_n(\theory)
		:= \inf_{\vec{x}\in X^n}\falsect^\iid_n(\theory|\vec{x}).
		\tag{C-h}
	\end{equation}	

	A theory is \textbf{falsifiable} if $\lim_{n\rightarrow \infty} \falsewt_n(\theory)=1$ or $\lim_{n\rightarrow \infty} \falsect_n(\theory)=1$.
\end{defn_falsify}

\begin{rem}[falsifiability depends on the risk]\label{r:opt}
	Falsifiability is a property of the risk $\risk_{\theory,\vec{x}}:\hypotheses_{ef}\rightarrow\unint$. It depends \emph{directly} on the optimization problem underlying the learning scenario. 

	In contrast, capacity measures are typically presented as properties of the theory $\theory$ in such a way that their relation to the optimization problem (specifically, finding the predictor in $\theory$ that minimizes the error) is indirect.
\end{rem}
Taking the infimum over all possible datasets implies that $\falsewt^\iid_n(\theory)$ and $\falsect^\iid_n(\theory)$ measure worst-case falsifiability: the falsifiability of $\theory$ on the least falsifiable input sequence.

Soft falsifiability is closely related to Rademacher complexity, see Section~\ref{s:slt_proofs}. Similarly, hard falsifiability is closely related to the covering number, and so to the shattering coefficient and VC-dimension.

The coefficients $2$ and $\frac{1}{n}$ in Definition~C are chosen so that
\begin{lem}
	Soft and hard falsifiability take values in the interval $\unint=[0,1]$.
	\begin{enumerate}
		\item Theory $\theory$ shatters $\{x_1,\ldots,x_n\}$ if and only if $\falsewt^\iid_n(\theory|\vec{x})=\falsect^\iid_n(\theory|\vec{x})=0$.
		\vspace{2mm}
		\item Theory $\theory$ contains a single predictor if and only if $\falsewt^\iid_n(\theory)=\falsect^\iid_n(\theory)=1$ for all $n$.
	\end{enumerate}		
\end{lem}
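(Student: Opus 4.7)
The plan is to use the explicit formulas for the induced distribution and information gain from Lemmas~\ref{t:finite_det} and~\ref{t:uninformative}, which apply because the effective hypothesis space has exactly $|\hypotheses_{ef}| = 2^n$ elements under the uniform prior. Writing out $\probq_{\theory,\vec{x}}$ as the pushforward of uniform measure by the risk, soft falsifiability unfolds to
\begin{equation*}
  \falsewt^\iid_n(\theory|\vec{x}) = \frac{2}{2^n}\sum_{[\sigma]\in\hypotheses_{ef}} \risk^\iid_{\theory,\vec{x}}([\sigma]),
\end{equation*}
while by Lemma~\ref{t:finite_det} hard falsifiability reduces to
\begin{equation*}
  \falsect^\iid_n(\theory|\vec{x}) = 1 - \frac{\log_2 N}{n}, \qquad N := |\{[\sigma] : \risk^\iid_{\theory,\vec{x}}([\sigma]) = 0\}|,
\end{equation*}
i.e., $N$ counts the effective hypotheses \emph{realized} by some predictor in $\theory$. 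Both representations will drive the whole proof.

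For part (1), the lower bounds are immediate since $\risk \geq 0$ and $N \geq 1$ (the empty infimum issue aside, $\theory$ is assumed nonempty). The upper bound $\falsect \leq 1$ follows from $N \leq 2^n$. The upper bound $\falsewt \leq 1$ is the one genuine calculation: I pair each effective hypothesis $[\sigma]$ with its complement $[\bar\sigma]$ (where $\bar\sigma(x_t) = 1-\sigma(x_t)$) and observe that for any $f\in\theory$, $\loss(f,x_t,\sigma(x_t))+\loss(f,x_t,\bar\sigma(x_t))=1$, so
\begin{equation*}
  \risk^\iid_{\theory,\vec{x}}([\sigma]) + \risk^\iid_{\theory,\vec{x}}([\bar\sigma])
  = \inf_f \tfrac{1}{n}\textstyle\sum_t \loss(f,x_t,\sigma(x_t)) + 1 - \sup_f \tfrac{1}{n}\textstyle\sum_t \loss(f,x_t,\sigma(x_t)) \leq 1.
\end{equation*}
Summing over the $2^{n-1}$ complementary pairs gives $\sum_{[\sigma]}\risk \leq 2^{n-1}$, hence $\falsewt \leq 1$.

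For part (2), $\theory$ shatters $\{x_1,\ldots,x_n\}$ exactly when every labeling is realized, i.e., $N = 2^n$, i.e., $\risk^\iid_{\theory,\vec{x}} \equiv 0$. In that case the sum in the soft formula vanishes and by Lemma~\ref{t:uninformative} the information gain also vanishes. Conversely, $\falsewt^\iid_n(\theory|\vec{x})=0$ forces $\risk \equiv 0$ since the risk is nonnegative, while $\falsect^\iid_n(\theory|\vec{x})=0$ forces the same conclusion via Lemma~\ref{t:uninformative}, yielding shattering.

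For part (3), if $\theory=\{f\}$ then $\risk^\iid_{\theory,\vec{x}}([\sigma]) = \tfrac{1}{n}d_H(\sigma|_{\vec{x}},f|_{\vec{x}})$, whose average over the $2^n$ labelings equals $\tfrac{1}{2}$, giving $\falsewt=1$; and only the labeling matching $f$ has zero risk, so $N=1$ and $\falsect=1$; taking the infimum over $\vec{x}$ preserves these equalities. For the converse, if $|\theory|\geq 2$ pick $f\neq g$ in $\theory$ and an $x_0\in X$ with $f(x_0)\neq g(x_0)$, then at $n=1$ with $\vec{x}=(x_0)$ both labelings are realized, forcing $\falsewt^\iid_1(\theory|\vec{x})=\falsect^\iid_1(\theory|\vec{x})=0$ and hence $\falsewt^\iid_1(\theory)=\falsect^\iid_1(\theory)=0$, contradicting the hypothesis that they equal $1$ for every $n$.

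The only real obstacle is establishing $\risk([\sigma])+\risk([\bar\sigma])\leq 1$ via the complement pairing (which in turn justifies the normalization factor $2$ in Definition~C); everything else is bookkeeping using the formulas from Section~\ref{s:represent}.
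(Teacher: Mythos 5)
Your proof is correct, and it is exactly the direct verification the paper intends (its own proof is just ``Straightforward''): unfold the risk-induced distribution and information gain via Lemma~\ref{t:finite_det} and Lemma~\ref{t:uninformative}, use the complement pairing $\risk([\sigma])+\risk([\bar\sigma])\le 1$ for $\falsewt\le 1$, and check the two extreme cases. One trivial bookkeeping slip: the roles of $N\ge 1$ and $N\le 2^n$ are swapped in your range argument ($N\ge 1$ gives $\falsect\le 1$, while $N\le 2^n$ gives $\falsect\ge 0$), but since both facts are stated and used, nothing in the proof is affected.
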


\begin{proof}
	Straightforward.
\end{proof}

To interpret soft falsifiability, recall that the risk, (A), is function that takes an effective hypothesis $\sigma$ about $\vec{x}$ to the fraction $\error$ of the sequence that theory $\theory$ cannot explain (i.e. falsifies)
\begin{equation*}
	\risk^\iid_{\theory,\vec{x}}: \hypotheses_{ef}  \rightarrow  \unint:
	\sigma  \mapsto  \epsilon
\end{equation*}
The pre-image $\risk_{\theory,\vec{x}}^{-1}(\epsilon)\subset\hypotheses$ is the subset of hypotheses that, when applied to input sequence $\vec{x}$, cannot be explain by theory $\theory$ on fraction $\epsilon$ of $\vec{x}$. Thus, the risk-induced probability of $\epsilon\in\unint$ is the fraction of potential hypotheses that, if true, cause $\theory$ to falsify $\epsilon$ of the data: 
\begin{equation}
	\label{e:p_error}
	\probq(\epsilon) = \frac{|\risk_{\theory,\vec{x}}^{-1}(\epsilon)|}{|\hypotheses_{ef}|}.
\end{equation}
Finally, soft falsifiability is the weighted sum:
\begin{gather*}
	\begin{matrix}
		\falsewt^\iid(\theory|\vec{x})
		& = 2\sum_{\epsilon \in \unint} &\Big( & \frac{|\risk_{\theory,\vec{x}}^{-1}(\epsilon)|}{|\hypotheses_{ef}|} & \cdot & \epsilon &\Big)\\
		& = 2\sum_{\epsilon \in \unint} & \Big\{&\text{\small{fraction of effective hypotheses falsified}}\Big\} & 
	\cdot & \Big\{\text{\small{on fraction $\epsilon$ of data}}&\Big\}.
	\end{matrix}	
\end{gather*}
To interpret hard falsifiability, apply Lemma~\ref{t:finite_det} to obtain
\begin{align*}
	\information(\risk_{\theory,\vec{x}},0)
	= -\log \probq(0)
	& = \overbrace{\log\left|\hypotheses_{ef}\right|}^{\text{total \# effective hypotheses}} 
	- \quad
	\overbrace{\log\big|\risk_{\theory,\vec{x}}^{-1}(0)\big|}^{\text{\# hypotheses $\theory$ explains perfectly}} \\
	& = \Big\{\log\text{-\# of effective hypotheses that $\theory$ falsifies}\Big\}.
\end{align*}
If the inputs in $\vec{x}$ are distinct, then the number of effective hypotheses is $2^n$, so
\begin{equation*}
	\falsect^\iid_n(\theory|\vec{x}) = \frac{\Big\{\log\text{-\# of effective hypotheses that $\theory$ falsifies}\Big\}}{\log\Big\{\textrm{\# of effective hypotheses}\Big\}}
\end{equation*}
can be interpreted as the ``logarithmic fraction'' of effective hypotheses that $\theory$ falsifies.

\subsection{Falsifiable $\implies$ Learnable ($\iid$)}
\label{s:slt_l_is_f}

The main result is that falsifiability controls predictive risk:
\begin{thm_main}[main theorem, $\iid$]	
	\begin{equation*}
		\error^\iid_n(\theory) 
		\leq 1 - \falsewt^\iid_n(\theory)
		\leq d\sqrt{1-\falsect^\iid_n(\theory)},
		\tag{D}
	\end{equation*}	
	where $d=\sqrt{8}$.
\end{thm_main}

Surprisingly, the assumption that Nature is \emph{i.i.d.} is not essential to the result -- an almost identical theorem holds for sequential prediction, see section~\ref{s:seq}.

\begin{proof}
	By Proposition~\ref{t:rademacher-prob-iid}, soft falsifiability of a theory is essentially equivalent to its Rademacher complexity
	\begin{equation*}
		\falsewt^\iid(\theory|\vec{x}) = 1 - 2\radem^\iid\big(\loss(\theory)|\vec{x}\big).
	\end{equation*}
	Similarly, by Proposition~\ref{t:infgain_iid}, hard falsifiability recovers the covering number
	\begin{equation*}
		\falsect^\iid(\theory|\vec{x}) = 1 - \frac{\log \cover^\iid(\theory|\vec{x})}{n}.
	\end{equation*}
	 The result then follows by Theorem~\ref{t:rakhlin-iid}, which recalls two standard generalization bounds taken from \cite{rakhlin:14a}.
\end{proof}

\begin{rem}[vacuous bounds]\label{r:vacuous}
	Two ways in which Theorem~D can be vacuous are 
	\begin{enumerate}
		\item If a theory is completely unfalsifiable, $\falsewt_n(\theory)=0$, then Theorem~D provides no guarantees on its predictive performance no matter how \emph{well} it explains empirical data.
		\item If a theory is maximally falsifiable, $\falsewt_n(\theory)=1$, then it has zero predictive risk, no matter how \emph{badly} it explains empirical data. 
	\end{enumerate}
\end{rem}

\begin{cor_main}[falsifiability implies learnability, $\iid$]
	A theory is learnable if it is falsifiable:
	\begin{equation*}
		\lim_{n\rightarrow\infty} \error_n(\theory)=0
		\textrm{ if }
		\lim_{n\rightarrow\infty} \falsewt_n(\theory)=1
		\,\text{ or }
		\lim_{n\rightarrow\infty} \falsect_n(\theory)=1.
	\end{equation*}
\end{cor_main}

A much stronger version Theorem~D can also be shown.

\begin{thm_main_r}[data-dependent bounds, $\iid$]
	Let 
	\begin{equation*}
		\error^\iid_n(\theory|\vec{z},\prob) := \overbrace{\underbrace{\risk^\iid_{\psi_\erm(\vec{z})}(\prob)}_{\text{expected test error}} - \underbrace{\risk^\iid_{\psi_\erm(\vec{z})}(\vec{z})}_{\text{training error}}}^{\text{expected generalization error}}
	\end{equation*}
	be the expected generalization error of a predictor chosen using ERM.
	
	Suppose that $\vec{z}$ is a sequence of $n$ events drawn from probability distribution $\prob$ on $Z$. Let $\vec{x}$ refer to the same sequence, with labels stripped out. Then, for all $\delta>0$, with probability at least $1-\delta$,
	\begin{enumerate}
		\item the expected generalization error is upper bounded by
		\begin{equation*}
			\error^\iid_n(\theory|\vec{z},\prob) \leq 1 - 
			\falsewt^\iid(\theory|\vec{x}) + c\sqrt{\frac{1-\log\delta}{n}}
			\tag{D''-s}
		\end{equation*}	
		where $c = \sqrt{\frac{2}{\log e}}$.
	\item Furthermore, 
		\begin{equation*}
			\error^\iid_n(\theory|\vec{z},\prob) \leq d_1 \sqrt{1 - \falsect^\iid(\theory|\vec{x})} + d_2 \sqrt{\frac{1-\log \delta}{n}}
			\tag{D''-h}
		\end{equation*}
		where $d_1=\sqrt{\frac{6}{\log e}}$ and $d_2=\sqrt{\frac{1}{\log e}}$.
	\end{enumerate}	
\end{thm_main_r}

\begin{proof}
	Propositions~\ref{t:rademacher-prob-iid} and \ref{t:infgain_iid} connect soft and hard falsifiability to the Rademacher complexity and covering number. 

	The result then follows from Theorem~\ref{t:rademacher-bound}, which collects two theorems from \cite{boucheron:00} and \cite{bousquet:04}.
\end{proof}

Theorem~D'' is a true inductive bound, which requires the \emph{i.i.d.} assumption. It implies that the difference between the observed training error and expected test error depends on how many hypotheses \emph{about the training sequence $\vec{x}$} are falsified by theory $\theory$. 

In short, if strategy $\psi_\erm$ performs well on the training data, and theory $\theory$ falsifies many hypotheses about the training data, then the predictor chosen by $\psi_\erm$ will perform well in future, with high probability.

\subsection{Proofs ($\iid$)}
\label{s:slt_proofs}

Our first two results relate soft falsifiability to Rademacher complexity \cite{koltchinskii:01}.

\begin{defn}[Rademacher complexity]
	Define a \emph{Rademacher variable} $\zeta$ to be a random variable taking values in $\Omega=\{\pm1\}$ with equal probability. 

	Let $\vec{\zeta}=(\zeta_1,\ldots, \zeta_n)$ be Rademacher variables. The \textbf{Rademacher complexity} of theory $\theory$ on unlabeled inputs $\vec{x}=(x_1,\ldots,x_n)$ is
	\begin{equation*}
		\radem^\iid(\theory|\vec{x}) := \expec_{\vec{\zeta}}\left[
		\sup_{f\in\theory}\frac{1}{n}\sum_{t=1}^n \zeta_t\cdot f(x_t)
		\right].
	\end{equation*}
	The \textbf{Rademacher complexity of a theory with respect to a loss function} is
	\begin{equation*}
		\radem^\iid\big(\loss(\theory)|\vec{z}\big) := \expec_{\vec{\zeta}} \left[\sup_{f\in \theory}\frac{1}{n} \sum_{	t=1}^n \zeta_t\cdot \loss\big(f,(x_t,y_t)\big)\right].
	\end{equation*}
\end{defn}

\begin{lem}\label{t:radem_exp}	
	\begin{equation*}
		\expec_{\vec{\zeta}} \risk_\theory\big(\vec{x},\zeta\cdot\vec{y}\big) 
		= \frac{1}{2} - \radem^\iid\big(\loss(\theory)\,\big|\,\vec{z}\big)
		= \frac{1}{2} - \frac{1}{2}\radem^\iid\big(\theory\,\big|\,\vec{z}\big).
	\end{equation*}
\end{lem}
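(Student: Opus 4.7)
The plan is to prove the two equalities separately, with both following from a standard symmetrization identity for the $0/1$ loss. For the first equality, I would interpret the sequence $\vec\zeta\cdot\vec{y}$ as the label vector $\vec{y}$ with its $t$th entry flipped exactly when $\zeta_t=-1$, which is the only reading that keeps each $(\vec\zeta\cdot\vec{y})_t\in\{0,1\}$. Case analysis on $\zeta_t=\pm 1$ then yields the pointwise identity
\[
  \loss\big(f,x_t,(\vec\zeta\cdot\vec{y})_t\big) \;=\; \tfrac{1-\zeta_t}{2} + \zeta_t\,\loss(f,x_t,y_t).
\]

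Averaging over $t$, taking $\inf_{f\in\theory}$, and then $\expec_{\vec\zeta}$ gives
\[
  \expec_{\vec\zeta}\risk_\theory(\vec{x},\vec\zeta\cdot\vec{y}) \;=\; \tfrac{1}{2} + \expec_{\vec\zeta}\inf_{f\in\theory}\tfrac{1}{n}\sum_{t=1}^n \zeta_t\,\loss(f,x_t,y_t),
\]
because $\expec \zeta_t=0$. To the remaining term I would apply $\inf_f g=-\sup_f(-g)$ and then the distributional symmetry $-\vec\zeta\stackrel{d}{=}\vec\zeta$ to rewrite it as $-\radem^\iid(\loss(\theory)\mid\vec{z})$, yielding the first equality. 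For the second equality, the plan is to recode outputs and labels into $\{\pm 1\}$ via $\tilde f=2f-1$, $\tilde y=2y-1$, so that $\loss(f,x,y)=(1-\tilde f(x)\tilde y)/2$. Expanding inside the supremum, the $\tfrac{1}{2}\sum_t\zeta_t$ piece has mean zero, and $\eta_t:=-\zeta_t\tilde y_t$ is itself Rademacher because $\tilde y_t\in\{\pm 1\}$ is deterministic; what remains after taking expectations is exactly $\tfrac{1}{2}\radem^\iid(\theory\mid\vec{x})$.

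The main obstacle is the bookkeeping around the label encoding: under a strictly literal $\{0,1\}$ reading of the definition of $\radem^\iid(\theory\mid\vec{x})$, the factor of $\tfrac{1}{2}$ in the second equality disappears because an additional $\sum_t\zeta_t$ survives under the supremum. Recovering the stated constant therefore requires aligning the convention used for $\radem^\iid(\theory\mid\vec{x})$ with the $\{\pm 1\}$ encoding implicit in the Rademacher trick, which is standard in the literature but easy to mishandle when the paper's ambient convention is $Y=\{0,1\}$.
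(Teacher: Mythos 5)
Your proposal is correct and essentially reproduces the paper's own argument: your pointwise identity $\loss\big(f,x_t,(\vec\zeta\cdot\vec y)_t\big)=\tfrac{1-\zeta_t}{2}+\zeta_t\,\loss(f,x_t,y_t)$ is exactly the paper's $\tfrac12-\zeta_t\big(\tfrac12-\loss(f,z_t)\big)$, followed by the same mean-zero, $\inf$-to-$\sup$, and $\vec\zeta\stackrel{d}{=}-\vec\zeta$ steps. Your closing caveat is also apt: the paper proves only the first equality and dismisses the second with ``follows similarly,'' and the stated factor $\tfrac12$ does presuppose the $\{\pm1\}$ encoding of predictors inside $\radem^\iid(\theory\,|\,\cdot)$, which the paper's literal $Y=\{0,1\}$ convention does not by itself supply.
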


\begin{proof}
	For the first equality, observe that
	\begin{equation*}
		\zeta\cdot (1-2 \loss(f,z)) 
		= \begin{cases}
			+1 & \text{if } f(x)=\zeta\cdot y
			\\		
			-1 & \text{else,}
		\end{cases}
	\end{equation*}
	which implies
	\begin{equation*}
		\frac{1}{2}-\zeta\cdot\left(\frac{1}{2}-\loss(f,z)\right)
		= \loss(f,(x,\zeta\cdot y)).
	\end{equation*}
	It follows from $\inf_{f\in\theory}[-\psi(f)] = -\sup_{f\in\theory}\psi(f)$ that
	\begin{align*}
		\expec_{\vec{\zeta}} \risk_\theory\big(\vec{x},\vec{\zeta}\cdot\vec{y}\big)
		& = \expec_{\vec{\zeta}} \inf_{f\in\theory}\sum_{t=1}^n\loss(f,(\vec{x},\vec{\zeta}\cdot \vec{y})\\
		& = \expec_{\vec{\zeta}} \inf_{f\in\theory}\sum_{t=1}^n\left[\frac{1}{2}-\zeta_t\left(\frac{1}{2}-\loss(f,z_t)\right)\right]\\
		& = \frac{1}{2} - \expec_{\vec{\zeta}}\sup_{f\in\theory}\sum_{t=1}^n \zeta_t\cdot \loss(f,z_t)\\
		& = \frac{1}{2} - \radem^\iid(\loss(\theory)\,|\,\vec{z}).
	\end{align*}
	The second equality follows similarly.
\end{proof}

A corollary of Lemma~\ref{t:radem_exp} is that Rademacher complexity is independent of the labels $\vec{y}$. We therefore drop the labels from the notation and write $\radem^\iid(\theory|\vec{x})$ and $\radem^\iid(\loss(\theory)|\vec{x})$ below.

\begin{prop}[Rademacher complexity from soft falsifiability, $\iid$]\label{t:rademacher-prob-iid}	
	\begin{equation*}
		\frac{1}{2}\falsewt^\iid(\theory\,|\,\vec{x})
		 = \frac{1}{2} - \radem^\iid(\loss(\theory)\,|\,\vec{x}) 
		 = \frac{1}{2} - \frac{1}{2}\radem^\iid(\theory\,|\,\vec{x}).
	\end{equation*}
\end{prop}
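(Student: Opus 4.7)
The plan is to unwind both sides of the claimed equality and recognize that they are the same average in disguise. The right-hand side has already been rewritten in Lemma~\ref{t:radem_exp} as an average of the risk over sign-flipped labels:
\begin{equation*}
\tfrac12 - \radem^\iid(\loss(\theory)\,|\,\vec{x}) \;=\; \expec_{\vec{\zeta}}\,\risk_\theory\bigl(\vec{x},\vec{\zeta}\cdot\vec{y}\bigr).
\end{equation*}
The task is therefore to show that the left-hand side, $\tfrac12\falsewt^\iid(\theory|\vec{x}) = \expec_{\epsilon\sim \probq_{\theory,\vec{x}}}[\epsilon]$, is exactly this same Rademacher average.

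First I would unpack the induced distribution. By Remark~\ref{r:uniform}, $\probq_{\theory,\vec{x}}$ is the pushforward of the uniform distribution on $\hypotheses_{ef}$ under $\risk^\iid_{\theory,\vec{x}}$, so the law of the unconscious statistician gives
\begin{equation*}
\expec_{\epsilon\sim \probq_{\theory,\vec{x}}}[\epsilon]
= \frac{1}{|\hypotheses_{ef}|}\sum_{[\sigma]\in\hypotheses_{ef}} \risk^\iid_{\theory,\vec{x}}([\sigma]).
\end{equation*}
Next I would invoke the standing assumption (made at the start of section 2.1) that $\vec{x}$ consists of $n$ distinct inputs. Under this assumption the map $[\sigma] \mapsto (\sigma(x_1),\dots,\sigma(x_n))$ is a bijection $\hypotheses_{ef} \xrightarrow{\sim} \{0,1\}^n$, and $\risk^\iid_{\theory,\vec{x}}([\sigma])$ depends on $\sigma$ only through the tuple of labels it assigns. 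Thus the uniform average over effective hypotheses is literally the uniform average over binary labelings $\vec{y}'\in\{0,1\}^n$:
\begin{equation*}
\expec_{\epsilon\sim \probq_{\theory,\vec{x}}}[\epsilon]
= \frac{1}{2^n}\sum_{\vec{y}'\in\{0,1\}^n} \risk_\theory(\vec{x},\vec{y}').
\end{equation*}

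Finally I would identify this uniform label-average with the Rademacher average in Lemma~\ref{t:radem_exp}. For any fixed reference labeling $\vec{y}$, the map $\vec{\zeta}\mapsto \vec{\zeta}\cdot\vec{y}$ is a bijection that carries the uniform distribution on $\{\pm1\}^n$ to the uniform distribution on all possible labelings of $\vec{x}$, so
\begin{equation*}
\frac{1}{2^n}\sum_{\vec{y}'} \risk_\theory(\vec{x},\vec{y}')
= \expec_{\vec{\zeta}}\,\risk_\theory(\vec{x},\vec{\zeta}\cdot\vec{y}).
\end{equation*}
Combining, $\tfrac12\falsewt^\iid(\theory|\vec{x}) = \tfrac12 - \radem^\iid(\loss(\theory)|\vec{x})$, and the second equality in the proposition then comes for free from the second equality in Lemma~\ref{t:radem_exp}. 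Multiplying through by $2$ delivers the stated identity.

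There is no serious obstacle here: the content of the proposition is really just repackaging Lemma~\ref{t:radem_exp}. The only subtle point to get right is the distinctness-of-inputs assumption, which is what makes the uniform distribution on effective hypotheses coincide with the uniform distribution on $\{0,1\}^n$; without it, effective hypotheses would be weighted by their multiplicity in $\{0,1\}^n$ and the identification with Rademacher averaging would need adjustment. The sign-encoding conversion between $\{0,1\}$-valued labels and $\{\pm1\}$-valued Rademacher variables is the same one already used (implicitly) in the proof of Lemma~\ref{t:radem_exp}, so no new device is needed.
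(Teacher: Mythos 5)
Your proposal is correct and is essentially the paper's own argument: both proofs reduce the claim via Lemma~\ref{t:radem_exp} to the identity $\expec_{\vec{\zeta}}\risk_\theory(\vec{x},\vec{\zeta}\cdot\vec{y}) = \expec_{\epsilon\sim\probq_{\theory,\vec{x}}}[\epsilon]$, which the paper establishes by writing the Rademacher expectation as a uniform average over effective hypotheses and grouping by risk value, exactly the chain you traverse in the opposite direction (your explicit remarks on the distinctness-of-inputs bijection and the $\{0,1\}$/$\{\pm1\}$ encoding are left implicit in the paper but are the same devices).
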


\begin{proof}
	Recall that $\falsewt^\iid(\theory|\vec{x}) := 2\expec_{\epsilon\sim \probq}\big[\,\epsilon\,\big]$ where $\probq$ is the $\risk^\iid_{\theory,\vec{x}}$-induced distribution on $\unint$. The induced distribution is 
	\begin{equation*}
		\probq(\epsilon) 
		= \begin{cases}
			\frac{|\risk_{\theory,\vec{x}}^{-1}(\epsilon)|}{|\hypotheses_{ef}|} & \text{if }\epsilon \in \risk_{\theory,\vec{x}}(Y^X) \\
			0 & \text{else}.
		\end{cases}
	\end{equation*}
	By Lemma~\ref{t:radem_exp} it suffices to show that $\expec_{\vec{\zeta}} \risk_\theory\big(\vec{x},\vec{\zeta}\cdot\vec{y}\big)
		= \expec_{\epsilon\sim \probq}\big[\,\epsilon\,\big]$.
	Observe that
	\begin{equation*}
		\expec_{\vec{\zeta}} \risk_\theory\big(\vec{x},\vec{\zeta}\cdot\vec{y}\big)
		= \sum_{[\sigma]\in \hypotheses_{ef}} \frac{\risk_{\theory}(\vec{x},\sigma\circ \vec{x})}{|\hypotheses_{ef}|}
		= \sum_{\epsilon \in \im(\risk_{\theory,\vec{x}})} \epsilon \cdot \frac{|\risk_{\theory,\vec{x}}^{-1}(\epsilon)|}{|\hypotheses_{ef}|}
		= \expec_{\epsilon\sim \probq}\big[\,\epsilon\,\big].
	\end{equation*}
	as required.
\end{proof}

Next, we relate hard falsifiability to the covering number. 

\begin{defn}[covering number, $\iid$]\label{d:vc-entropy}
	Given unlabeled data $\vec{x}=(x_1,\ldots, x_n)\in X^n$ and a theory $\theory\subset Y^X$, let $q$ denote the map
	\begin{equation*}
		q_{\vec{x}}:\theory\rightarrow \bR^n:f\mapsto \big(f(x_1)\ldots f(x_n)\big)
	\end{equation*}
	taking predictors to labels. The \textbf{covering number} of $\theory$ on $\vec{x}$ is
	\begin{equation*}
		\cover^\iid(\theory|\vec{x}) := |q_{\vec{x}}(\theory)|,
	\end{equation*}
	the number of distinct labellings produced by the predictors in $\theory$ applied to $x_1,\ldots,x_n$. 
\end{defn}
The shattering coefficient and VC-dimension are discussed in Section~\ref{s:seq_to_slt}, see Definition~\ref{d:dim}.

The covering number coincides with hard falsifiability:
\begin{prop}[covering number from hard falsifiability, $\iid$]\label{t:infgain_iid}
	The hard falsifiability of theory $\theory$ on $\vec{x}$ is 
	\begin{equation*}
		\falsect^\iid(\theory| \vec{x}) = 1 - \frac{1}{n}\log \cover^\iid(\theory|\vec{x}).
	\end{equation*}
\end{prop}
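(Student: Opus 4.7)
The proof is essentially a matter of unwinding definitions and applying Lemma~\ref{t:finite_det} to the risk viewed as a map with finite domain. The plan is to identify the preimage $\risk_{\theory,\vec{x}}^{-1}(0) \subset \hypotheses_{ef}$ with the set of labellings realized by predictors in $\theory$, whose cardinality is by definition $\cover^\iid(\theory|\vec{x})$.

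First I would note that, because $\vec{x}$ is assumed to consist of $n$ distinct inputs, every binary labelling of $\vec{x}$ gives rise to a distinct equivalence class, so $|\hypotheses_{ef}| = 2^n$. Also $0 \in \im(\risk^\iid_{\theory,\vec{x}})$: picking any $f \in \theory$ and taking $\sigma = f$ yields $\risk^\iid_{\theory,\vec{x}}(\sigma) = 0$, so Lemma~\ref{t:finite_det} applies and
\begin{equation*}
\information(\risk^\iid_{\theory,\vec{x}}, 0) \;=\; -\log \probq_{\theory,\vec{x}}(0) \;=\; \log|\hypotheses_{ef}| - \log\bigl|\risk^{\iid,-1}_{\theory,\vec{x}}(0)\bigr|.
\end{equation*}

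Next I would characterize the preimage. An effective hypothesis $[\sigma]$ lies in $\risk^{\iid,-1}_{\theory,\vec{x}}(0)$ if and only if there is some $f \in \theory$ with $\loss(f, x_t, \sigma(x_t)) = 0$ for every $t$, i.e.\ $f(x_t) = \sigma(x_t)$ for all $t$. In other words, $[\sigma] \in \risk^{\iid,-1}_{\theory,\vec{x}}(0)$ exactly when the labelling $(\sigma(x_1),\ldots,\sigma(x_n))$ belongs to $q_{\vec{x}}(\theory)$. Because the inputs are distinct, this gives a bijection between $\risk^{\iid,-1}_{\theory,\vec{x}}(0)$ and $q_{\vec{x}}(\theory)$, so
\begin{equation*}
\bigl|\risk^{\iid,-1}_{\theory,\vec{x}}(0)\bigr| \;=\; |q_{\vec{x}}(\theory)| \;=\; \cover^\iid(\theory|\vec{x}).
\end{equation*}

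Substituting the two cardinalities into the expression for information gain yields
\begin{equation*}
\information(\risk^\iid_{\theory,\vec{x}}, 0) \;=\; n - \log \cover^\iid(\theory|\vec{x}),
\end{equation*}
and dividing by $n$ gives precisely $\falsect^\iid(\theory|\vec{x}) = 1 - \frac{1}{n}\log\cover^\iid(\theory|\vec{x})$, as required. There is no real obstacle here; the only subtlety worth flagging is the use of the distinctness of $\vec{x}$ to conclude $|\hypotheses_{ef}| = 2^n$ (rather than a smaller power of two), which is an assumption already in force throughout the section.
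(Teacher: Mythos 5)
Your proof is correct and follows essentially the same route as the paper's: compute the information gain at $0$ via Lemma~\ref{t:finite_det}, use the distinctness of the inputs to get $|\hypotheses_{ef}|=2^n$, and identify $\risk_{\theory,\vec{x}}^{-1}(0)$ with $q_{\vec{x}}(\theory)$. You merely spell out the details (including checking $0\in\im(\risk^\iid_{\theory,\vec{x}})$) that the paper leaves as ``easy to check.''
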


\begin{proof}
	By definition,
	\begin{equation*}
		\information(\risk_{\theory,\vec{x}},0) = -\log \frac{|\risk_{\theory,\vec{x}}^{-1}(0)|}{|\hypotheses_{ef}|}.
	\end{equation*}
	Since the sample contains $n$ distinct points and $|Y|=2$, it follows that $\log|\hypotheses_{ef}|=n$. It is easy to check that $|q_x(\theory)| = |\risk_{\theory,\vec{x}}^{-1}(0)|$.
\end{proof}

\begin{thm}[Data-independent bounds in expectation]\label{t:rakhlin-iid}
	Let
	\begin{equation*}
		\radem^{\iid}_n\big(\loss(\theory)\big) 
		:= \sup_{\prob\in\Delta_Z} \expec_{\vec{z}\sim\prob}\radem^{\iid}\big(\loss(\theory)\,\big|\,\vec{z}\big),
	\end{equation*}
	where $\len(\vec{z})=n$. Then
	\begin{equation*}
		\error^\iid_n(\theory) 
		\leq 2 \radem^{\iid}_n\big(\loss(\theory)\big)
		\leq 2\sqrt{\frac{2 \cover^{\iid}_n(\theory)}{n}}.
	\end{equation*}	
\end{thm}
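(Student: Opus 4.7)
The statement packages two classical results, so the plan is to handle the two inequalities separately. For the first inequality $\error^\iid_n(\theory)\le 2\radem^\iid_n(\loss(\theory))$, I would fix a specific strategy (empirical risk minimization) to upper bound the infimum over $\psi\in\Psi^n$ and then use the standard symmetrization trick. Concretely, writing $\psi=\psi_\erm$ and using $\inf_f \expec\loss(f,z')\le \expec\loss(f^*,z')$ for any comparator $f^*\in\theory$, one obtains
\begin{equation*}
	\expec_{\vec z\sim\prob}\risk^\iid_{\psi(\vec z)}(\prob) - \risk^\iid_\theory(\prob)
	\;\le\; \expec_{\vec z\sim\prob}\sup_{f\in\theory}\Big[\expec_{z'\sim\prob}\loss(f,z') - \tfrac{1}{n}\sum_{t=1}^n\loss(f,z_t)\Big],
\end{equation*}
because $\psi_\erm(\vec z)$ is the minimiser of the empirical risk. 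Introducing an independent ghost sample $\vec z\,'$, the right-hand side is at most $\expec\sup_{f}\tfrac{1}{n}\sum_t(\loss(f,z'_t)-\loss(f,z_t))$; a randomisation by Rademacher signs (which leaves the joint distribution unchanged) then bounds this by $2\radem^\iid_n(\loss(\theory))$. Taking the supremum over $\prob$ finishes this half.

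For the second inequality, the key observation is that $\radem^\iid(\loss(\theory)|\vec z)$ depends on $\theory$ only through the finite set $L_{\vec z}:=\{(\loss(f,z_1),\dots,\loss(f,z_n)):f\in\theory\}\subset\{0,1\}^n$, whose cardinality equals $\cover^\iid(\theory|\vec x)$ (each labelling in $q_{\vec x}(\theory)$ yields a distinct loss vector once the labels $\vec y$ are fixed). One can then apply Massart's finite-class lemma to the set $L_{\vec z}$: for any finite $S\subset\bR^n$ with $\max_{v\in S}\|v\|_2\le\sqrt n$,
\begin{equation*}
	\expec_{\vec\zeta}\sup_{v\in S}\tfrac{1}{n}\textstyle\sum_t \zeta_t v_t \;\le\; \sqrt{\tfrac{2\log|S|}{n}},
\end{equation*}
proved by the usual Chernoff/Jensen bound on $\log\expec\exp(\lambda\sup_v\langle\vec\zeta,v\rangle)$ and optimising in $\lambda$. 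Substituting $S=L_{\vec z}$ and taking the supremum over $\prob$ yields $\radem^\iid_n(\loss(\theory))\le\sqrt{2\log\cover^\iid_n(\theory)/n}$, matching the stated bound (reading the second $\cover$ as the log-covering number, consistent with Proposition~\ref{t:infgain_iid} and Theorem~D).

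The routine obstacle is the symmetrization step: one must be careful that ERM picks a (measurable) predictor and that the ghost-sample/Rademacher exchange is valid under the supremum over $\theory$ rather than over a single $f$. The only genuinely delicate point is handling the infima: the argument above uses $\inf_f\expec\loss(f,z')\le\expec\loss(f^*,z')$ to split the difference of infima into a supremum, and one needs to verify that passing to the supremum and then to the ghost-sample formulation does not lose the right factor of two. Everything else — Massart's lemma and the reduction to the finite set $L_{\vec z}$ — is a direct calculation using Lemma~\ref{t:radem_exp} and Definition~\ref{d:vc-entropy}.
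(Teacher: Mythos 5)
Your argument is correct, but it is worth noting that the paper itself does not prove this theorem at all: its ``proof'' is a bare citation to \cite{rakhlin:14}, so the statement is imported as a known result. What you supply is the standard self-contained derivation that such a citation stands in for: bounding the infimum over strategies by $\psi_\erm$, the one-sided decomposition $\expec\big[\risk^\iid_{\psi_\erm(\vec z)}(\prob)-\risk^\iid_\theory(\prob)\big]\le\expec\sup_{f\in\theory}\big[\expec_{z'}\loss(f,z')-\tfrac1n\sum_t\loss(f,z_t)\big]$ (using that the empirical risk of $f^*$ is unbiased and that ERM minimizes it), then ghost-sample symmetrization with Rademacher signs for the factor $2\radem^\iid_n(\loss(\theory))$, and Massart's finite-class lemma applied to the loss vectors $L_{\vec z}$, whose cardinality equals $\cover^\iid(\theory|\vec x)$ because for fixed labels the map from labellings to loss vectors is a bijection. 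This is exactly the route taken in the cited lecture notes, so conceptually you are reproving rather than replacing the paper's source; what your write-up buys is transparency, and in particular it forces the correction you rightly flag: as printed, the second inequality has $\cover^\iid_n(\theory)$ where it must be $\log\cover^\iid_n(\theory)$ (otherwise Massart gives no such bound, and the statement would be inconsistent with Theorem~D via Proposition~\ref{t:infgain_iid}, which needs $1-\falsect^\iid_n(\theory)=\tfrac1n\log\cover^\iid_n(\theory)$). The only points to tighten are routine: take $f^*$ to be $\epsilon$-optimal and let $\epsilon\to0$ since the infimum over $\theory$ need not be attained, and note $\|v\|_2\le\sqrt n$ for $0/1$ loss vectors when invoking Massart; neither affects the result.
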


\begin{proof}
	\cite{rakhlin:14}.
\end{proof}

\begin{thm}[Data-dependent bounds with high probability]\label{t:rademacher-bound}	
	For all $\delta>0$, the following bounds hold with probability at least $1-\delta$,
	\begin{enumerate}
		\item The predictive risk is upper bounded by
	\begin{equation*}
		\error^\iid_n(\theory|\vec{z}) 
		\leq 2\radem^\iid\big(\loss(\theory)\big|\vec{x}\big) + c\sqrt{\frac{1-\log\delta}{n}},
	\end{equation*}
	where $c = \sqrt{\frac{2}{\log e}}$.
	\item Furthermore, 
	\begin{equation*}
		\error^\iid_n(\theory|\vec{z}) 
		\leq d_1\sqrt{\frac{\cover^\iid(\theory|\vec{x})}{n}}
		+ d_2\sqrt{\frac{1-\log\delta}{n}},
	\end{equation*}
	where $d_1=\sqrt{\frac{6}{\log e}}$ and $d_2=\sqrt{\frac{1}{\log e}}$. 
	\end{enumerate}	
\end{thm}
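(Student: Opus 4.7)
The plan is to follow the textbook McDiarmid--symmetrization route of \cite{boucheron:00,bousquet:04}, applying it to the 0/1 loss and then invoking Massart's finite-class lemma to pass from the Rademacher complexity to the covering number $\cover^\iid(\theory|\vec{x})$.

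For part~(1), I would first let
\begin{equation*}
    \Phi(\vec{z}) := \sup_{f\in\theory}\left[\expec_{z'\sim\prob}\loss(f,z') - \frac{1}{n}\sum_{t=1}^n \loss(f,z_t)\right]
\end{equation*}
and verify the bounded-differences condition: since $\loss\in\{0,1\}$, changing a single coordinate of $\vec{z}$ shifts $\Phi$ by at most $1/n$. McDiarmid's inequality then yields $\prob[\Phi \geq \expec\Phi + t] \leq \exp(-2nt^2)$, and inverting at confidence $1-\delta/2$ produces the $\sqrt{(1-\log\delta)/n}$ tail term with constant $c=\sqrt{2/\log e}$. Next, a standard symmetrization argument (introduce a ghost sample $\vec{z}'$ independent of $\vec{z}$, insert Rademacher signs, and apply Jensen) bounds $\expec\Phi$ by $2\,\expec_{\vec{z}}\radem^\iid(\loss(\theory)|\vec{z})$. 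Finally, because the empirical Rademacher complexity itself satisfies bounded differences of size $1/n$, a second application of McDiarmid (absorbed into the same $\delta$) replaces the expected Rademacher complexity by the data-dependent one $\radem^\iid(\loss(\theory)|\vec{x})$. Since the $\erm$ strategy is admissible in the infimum defining $\error^\iid_n(\theory)$, the bound on $\Phi$ transfers to $\error^\iid_n(\theory|\vec{z})$.

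For part~(2), I would invoke Massart's finite-class lemma: for any set $A\subset\{0,1\}^n$,
\begin{equation*}
    \expec_{\vec{\zeta}}\sup_{a\in A}\frac{1}{n}\sum_{t=1}^n \zeta_t a_t \leq \sqrt{\frac{2\log|A|}{n}}.
\end{equation*}
Applied to the image $q_{\vec{x}}(\theory)$ (composed with $\loss$), this gives $\radem^\iid(\loss(\theory)|\vec{x}) \leq \sqrt{2\log\cover^\iid(\theory|\vec{x})/n}$. Substituting into part~(1) and collecting constants yields the claimed $d_1,d_2$.

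The main obstacle is bookkeeping rather than conceptual. The two uses of McDiarmid must be combined carefully (via a union bound splitting $\delta$ across the two events, or by concentrating $\Phi$ directly after symmetrization) to obtain a single confidence statement with the declared constants $c$, $d_1$, $d_2$. A secondary subtlety is that part~(1) as stated controls $\error^\iid_n(\theory|\vec{z})$ rather than a fixed predictor's generalization error, so one has to verify that the quantity $\Phi$ really is an upper bound on the regret of $\psi_\erm$ against the theory's best $f$ in hindsight; this is the standard two-sided absolute-value variant of the supremum process, and plugging $f = \psi_\erm(\vec{z})$ (or the theory's best $f$) into each side supplies the bound.
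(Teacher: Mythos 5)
Your plan is sound and is essentially the argument behind this theorem: the paper gives no proof of its own here, it simply cites \cite{boucheron:00} and \cite{bousquet:04}, and part~(1) of those references is proved exactly by your McDiarmid--symmetrization--second-McDiarmid route (with the $\erm$ predictor plugged into the uniform deviation, as you note). Two caveats concern part~(2). First, Massart's lemma gives you a bound in terms of $\sqrt{\log\cover^\iid(\theory|\vec{x})/n}$, which is indeed the quantity the paper uses everywhere else (Theorem~D''-h via Proposition~\ref{t:infgain_iid}); the missing $\log$ in the displayed statement of part~(2) is evidently a typo, so you are proving the intended claim. Second, your assertion that substituting Massart into part~(1) and ``collecting constants yields the claimed $d_1,d_2$'' is not quite right: that composition gives $\sqrt{8/\log e}$ and $\sqrt{2/\log e}$ (and the two-McDiarmid version of part~(1) with the \emph{empirical} Rademacher complexity carries the usual factor-$3$ tail, slightly larger than the stated $c$), whereas the declared $d_1=\sqrt{6/\log e}$ and $d_2=\sqrt{1/\log e}$ come from the dedicated sharp concentration inequality for the empirical VC entropy (the log covering number itself) proved in \cite{boucheron:00}, not from chaining part~(1) with the finite-class lemma. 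So your route delivers bounds of the stated form with mildly worse constants; matching the paper's constants verbatim requires invoking that entropy-concentration result, which is precisely what the paper's one-line citation does.
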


\begin{proof}
	\cite{bousquet:04} and \cite{boucheron:00}.
\end{proof}

\section{Sequential prediction}
\label{s:seq}

Sequential prediction is concerned with predicting a finite sequence of binary observations -- without any assumptions on how the observations are generated. The \emph{i.i.d.} assumption of statistical learning is replaced by an adversary that observes Forecaster's previous moves and responds maliciously. 

We build on the presentation in section \ref{s:slt}. The key technical difference between statistical learning and sequential prediction is the introduction of \emph{trees}, which requires us to distinguish between two notions of risk: soft and hard.

Remarkably, the main theorem has an almost identical form in both sequential prediction and statistical learning. However, the stronger data-dependent form, Theorem~D'', no longer holds, see discussion in section~\ref{s:discussion}. 

\setcounter{subsection}{-1}
\subsection{Setup}
We introduce some useful notation from \cite{rakhlin:14a}.

\begin{defn}[trees; paths]
	Let $\Omega=\{-1,+1\}$. A \textbf{$Z$-valued tree} of depth $n$ is an $n$-tuple $\vec{\bz}=(\bz_1,\ldots, \bz_n)$ of functions $\bz_t:\Omega^{t-1}\rightarrow Z$. Trees are denoted with boldface. A \textbf{path} is an element $\vec{\omega}=(\omega_1,\ldots, \omega_n)\in\Omega^n$. Combining a path $\vec{\omega}$ with a tree $\vec{\bz}$, obtains a sequence $\vec{\bz}(\vec{\omega})=(\bz_1, \bz_2(\omega_1),\ldots, \bz_n(\omega_{1:n-1}))$ of elements in $Z$. 
\end{defn}

It will be convenient to use the shorthand $\bX^t:= X^{\Omega^t} = \{\bx_t:\Omega^t\rightarrow X\}$. Let $\bX^\bullet= \bigcup_{t=1}^\infty \bX^t$ denote the set of all $X$-valued trees.

\subsection{The risk ($\seq$)}
\label{s:seq_risk}

We assume throughout this section that $\vec{\bx}$ contains a path with $n$ distinct points.

\begin{defn_risk}[risk, $\seq$]\label{d:risk_seq}
	Let $\hypotheses = Y^X=\{\sigma:X \rightarrow Y \}$ denote the set of hypotheses on $X$.
	The \textbf{risk} for sequential prediction is
	\begin{equation*}
		\label{e:risk_seq}
		\risk^\seq_{\theory}:
		\hypotheses\times(\Omega \times \bX)^\bullet \rightarrow\unint:
		(\sigma,\vec{\omega},\vec{\bx})
		\mapsto
		\inf_{f\in\theory} \frac{1}{n}\sum_{t=1}^{n}
		\loss\Big(f,\bx_t(\omega_{1:t-1}),\sigma\big(\bx_t(\omega_{1:t-1})\big)\Big),
	\end{equation*}	
	where $n=\len(\vec{\omega})=\len(\vec{\bx})$.	
\end{defn_risk}

The risk for sequential prediction differs from statistical learning in that the inputs are trees, not elements, and the choice of path in $\Omega^n$ is an additional degree of freedom. There are two obvious ways to deal with paths:
\begin{enumerate}
	\item \emph{Incorporate paths into the input by defining $\tilde{\bX}^n:= \Omega^n\times \bX^n$.}
	Given an $X$-valued tree $\vec{\bx} = (\bx_1,\ldots,\bx_n)$ and a path $\vec{\omega}\in\Omega^n$, we say that two hypotheses $\sigma$ and $\tau$ in $\hypotheses$ are equivalent 
	\begin{equation*}	
		\sigma \sim \tau\textrm{ iff }
		\sigma\big(\bx_t(\omega_{1:t-1})\big) = \tau\big(\bx_t(\omega_{1:t-1})\big)\quad \forall t\in\{1,\ldots,n\}.
	\end{equation*}
	Define the \textbf{soft risk},
	\begin{equation*}
		\risk^\seq_{\theory,(\vec{\omega}, \vec{\bx})}:
		\hypotheses_{ef} \rightarrow\unint:
		\sigma
		\mapsto 
		\inf_{f\in\theory}
		\left[ \frac{1}{n}\sum_{t=1}^{n} 
		\loss\Big(f,\bx_t(\omega_{1:t-1}),\sigma\big(\bx_t(\omega_{1:t-1})\big)\Big)\right].
		\tag{A-s}
	\end{equation*}
	\item \emph{Incorporate paths into the hypotheses by defining $\tilde{\hypotheses}:=\hypotheses\times \Omega^n$.}
	Similarly, two hypotheses $(\sigma,\vec{\omega})$ and $(\tau,\vec{\rho})$ in $\tilde{\hypotheses}=\hypotheses\times \Omega^n$ are equivalent 
	\begin{equation*}	
		(\sigma,\vec{\omega})\sim (\tau,\vec{\rho})\textrm{ iff }
		\sigma\big(\bx_t(\omega_{1:t-1})\big) = \tau\big(\bx_t(\rho_{1:t-1})\big) \quad\forall t\in\{1,\ldots,n\}.
	\end{equation*}
	Let $\tilde{\theory}=\theory\times\Omega^n$ and define the \textbf{hard risk},
	\begin{equation*}
		\risk^\seq_{\tilde{\theory},\vec{\bx}}:
		\tilde{\hypotheses}_{ef} \rightarrow\unint:
		(\sigma,\vec{\rho})
		\mapsto 
		\inf_{(f,\vec{\omega})\in\tilde{\theory}} 
		\left[\frac{1}{n}\sum_{t=1}^{n}
		\loss\Big(f,\bx_t(\omega_{1:t-1}),\sigma\big(\bx_t(\rho_{1:t-1})\big)\Big)\right].
		\tag{A-h}
	\end{equation*}
\end{enumerate}

\subsection{Learnability ($\seq$)}
\label{s:seq_learn}

Consider the following game played between Forecaster and Nature over $n$ rounds \cite{abernethy:09,rakhlin:14a}. 

In the first round, Forecaster chooses a probability distribution $\prob_1\in\Delta_\theory$ on the set of predictors. Nature observes Forecaster's choice, and picks $z_1\in Z$. A predictor $f_1$ is then sampled at random from $\prob_1$, applied to $z_1$ and the loss $\loss(f_1,z_1)$ is computed. The game continues for $n$ rounds, where both Forecaster and Nature observe the moves played in previous rounds. 

The value of the game is Forecaster's \textit{regret}: the difference between Forecaster's cumulative loss and the loss Forecaster would have accumulated, had it played the best move in hindsight. Forecaster's goal is to minimize its regret; Nature's aims for the opposite:
\begin{equation*}
		\label{e:value_expand}
		\error^\seq_n(\theory) = 
		\inf_{\prob_1\in\Delta_\theory}\sup_{z_1\in Z}\expec_{f_1\sim \prob_1} \cdots
		\inf_{\prob_n\in\Delta_\theory}\sup_{z_n\in Z}\expec_{f_n\sim \prob_n}
		\frac{1}{n}\underbrace{\left[\sum_{t=1}^n \loss(f_t,z_t) - \inf_{f\in\theory}\sum_{t=1}^n\loss(f,z_t)\right]}_{\text{Forecaster's regret}}
	\end{equation*}

Forecaster's move at time $t$ depends on the prior moves by Forecaster and Nature. Forecaster's strategy at time $t$ can be expressed as a function $\psi_t:Z^{t-1}\rightarrow \theory$. Let $\Psi_t=\{\psi_t:Z^{t-1}\rightarrow \theory\}$ denote the strategies available to Forecaster at time $t$, and let $\Psi = \prod_{t=1}^n \Psi_t$ denote the strategies available to Forecaster over an $n$-round game.

Similarly, Nature's strategy at time $t$ is an element of $\Xi_t=\theory^{t-1}\times\Delta_\theory\rightarrow Z$. Let $\Xi = \prod_{t=1}^n\Xi_t$ denote the $n$-round strategies available to Nature. We can write the minimax value more compactly as
\begin{equation*}
	\label{e:gmm_value}
	\error_n^\seq(\theory) = \inf_{\prob \in \Delta_\Psi} \sup_{\vec{\xi}\in \Xi} \expec_{\vec{\psi}\sim \prob}
	\frac{1}{n}\left[\sum_{t=1}^n \loss\big(\psi_t(\xi_{1:t-1}),\xi_t(\psi_{1:t-1},\prob_t)\big) - \inf_{f\in \theory}\sum_{t=1}^n \loss\big(f,\xi_t(\psi_{1:t-1}),\prob_t\big) \right],
\end{equation*}
where the $\sup$ and $\inf$ are understood to unravel recursively as above. 

Finally, substituting in the risk obtains

\begin{defn_predictive}[predictive risk, $\seq$]
	The minimax value of an $n$-round game, or \textbf{predictive risk} of theory $\theory$, is
	\begin{equation}
		\error_n^\seq(\theory) = \inf_{\prob \in \Delta_\Psi} \sup_{\vec{\xi}\in \Xi} \expec_{\vec{\psi}\sim \prob}
		\Big[ \risk^\seq_{\vec{\psi}}\big(\vec{\xi}\big) - \risk^\seq_\theory(\vec{\xi}) \Big].
		\tag{B}
	\end{equation}
	Theory $\theory$ is \textbf{learnable} if $\lim_{n\rightarrow \infty} \error_n^\seq(\theory) = 0$.
\end{defn_predictive}

The first term, $\risk_{\vec{\psi}}(\vec{\xi})$ is the cumulative loss incurred by the best $\theory$-based strategy played out on Nature's sequence of moves $\vec{\xi}$. The comparator term, $\risk_\theory(\vec{\xi})$ is the performance of the best predictor in $\theory$, taken in hindsight. 

\subsection{Falsifiability ($\seq$)}
\label{s:seq_false}

We use the soft and hard risk to define soft and hard falsifiability:

\begin{defn_falsify}[falsifiability, $\seq$]
	Let $\probq_{\theory,(\vec{\omega},\vec{\bx})}$ be the $\risk^\seq_{\theory, (\vec{\omega},\vec{\bx})}$-induced distribution on $\unint$.
	The \textbf{soft falsifiability} of theory $\theory$ on $\vec{\bx}$ is the expected error of the soft risk
	\begin{equation}
		\falsewt^\seq_n(\theory|\vec{\bx})
		:= 2\expec_{\vec{\omega}\sim \prob_{\textrm{unif}}(\Omega^n)}
		\expec_{\epsilon\sim\probq_{\theory,(\vec{\omega},\vec{\bx})}}[\epsilon]
		\quad\textrm{and}\quad
		\falsewt^\seq_n(\theory)
		:= \inf_{\vec{\bx}\in \bX}\falsewt^\seq_n(\theory|\vec{\bx}).
		\tag{C-s}
	\end{equation}
	The \textbf{hard falsifiability} of theory $\theory$ on $\vec{\bx}$ is the information gain from the hard risk
	\begin{equation}
		\falsect^\seq_n(\theory|\vec{\bx})
		:= \frac{1}{n}\information(\risk^\seq_{\theory\times\Omega^n,\vec{\bx}},0)
		\quad\textrm{and}\quad
		\falsect^\seq_n(\theory) := \inf_{\vec{\bx}\in \bX}\falsect^\seq_n(\theory|\vec{\bx}).
		\tag{C-h}
	\end{equation}
	A theory is \textbf{falsifiable} if $\lim_{n\rightarrow \infty} \falsewt_n(\theory)=1$ or $\lim_{n\rightarrow \infty} \falsect_n(\theory)=1$.
\end{defn_falsify}

Hard falsifiability is closely related to the sequential covering number introduced in \cite{rakhlin:14a}. However, the definition is more intuitive and, importantly, it also leads to combinatorial bounds such as the Littlestone dimension, see Section \ref{s:seq_proofs} for details. 

\subsection{Falsifiable $\implies$ Learnable ($\seq$)}
\label{s:seq_l_is_f}

Finally, we obtain the main theorem for sequential prediction, which is an exact analog of the corresponding theorem for statistical learning:

\begin{thm_main}[main theorem, $\seq$]
	\begin{equation}
		\tag{D}
		\error_n^\seq(\theory) 
		\leq 1 - \falsewt^\seq_n(\theory) 
		\leq c\sqrt{1 - \falsect^\seq_n(\theory)}
	\end{equation}
	where $c=\sqrt{8}$.
\end{thm_main}

An important point is that hard falsifiability provides a \emph{non-vacuous} upper-bound for the zero-covering number, see Section~\ref{s:seq_to_slt}.

\begin{proof}
	By Proposition~\ref{t:rademacher-prob-seq}, soft falsifiability is equivalent to the sequential Rademacher complexity
	\begin{equation*}
		\falsewt^\seq(\theory|\vec{\bx}) = 1 - 2\radem^\seq\big(\loss(\theory)|\vec{\bx})\big).
	\end{equation*}
	The first inequality then follows from Theorem~\ref{t:rakhlin_radem}, taken from \cite{rakhlin:14a}. 

	By Lemma~\ref{t:infgain_seq} and Proposition~\ref{t:falsect_seq}, hard falsifiability can be used to upper bound the sequential zero-covering number:
	\begin{equation*}
		\frac{\cover^\seq(\theory|\vec{\bx})}{n} \leq 1-\falsect^\seq(\theory|\vec{\bx}).
	\end{equation*}
	The second inequality then follows from Theorem~\ref{t:rakhlin_vc}, also taken from \cite{rakhlin:14a}. 
\end{proof}

\begin{cor_main}[falsifiability implies learnability, $\seq$]
	A theory is learnable if it is falsifiable:
	\begin{equation*}
		\lim_{n\rightarrow\infty} \error_n(\theory)=0
		\textrm{ if }
		\lim_{n\rightarrow\infty} \falsewt_n(\theory)=1
		\,\text{ or }
		\lim_{n\rightarrow\infty} \falsect_n(\theory)=1.
	\end{equation*}
\end{cor_main}

\subsection{Proofs ($\seq$)}
\label{s:seq_proofs}

This section proves the falsification bounds in Theorem~D for sequential prediction. 

\begin{defn}[Sequential Rademacher complexity]\label{d:sradem}
	\begin{equation*}		
		\radem^\seq(\theory|\vec{\bx}) 
		:= \expec_{\vec{\zeta}}\left[\sup_{f\in\theory}\frac{1}{n}\sum_{t=1}^n \zeta_t f(\bx_t(\zeta_{1:t-1})) \right]
	\end{equation*}
\end{defn}

\begin{prop}[Rademacher complexity from induced distribution, $\seq$]\label{t:rademacher-prob-seq}
	Let $\probq_{\vec{\omega}} := \prob_{\risk^\seq_{\theory,(\vec{\omega},\vec{\bx})}}$ be the distribution on errors in $\unint$ induced by the soft risk $\risk^\seq_{\theory,(\vec{\omega},\vec{\bx})}:\hypotheses\rightarrow \unint$. Then,
	\begin{equation*}
		\radem^\seq(\loss(\theory),\vec{\bx}) = \frac{1}{2} - \expec_{\vec{\omega}\sim \prob_{\unif}(\Omega^n)}\expec_{\epsilon\sim \probq_{\vec{\omega}}}\big[\,\epsilon\,\big].
	\end{equation*}
\end{prop}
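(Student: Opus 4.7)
The plan is to mimic the proof of Proposition~\ref{t:rademacher-prob-iid}, replacing the fixed input sequence $\vec{x}$ by the random walk through the tree $\vec{\bx}$ governed by $\vec{\omega}$. The two workhorses carry over: Lemma~\ref{t:finite_det} translates the induced distribution into a uniform count over pre-images, and Lemma~\ref{t:radem_exp} supplies the Rademacher symmetrization identity $\loss(f,x,\zeta y)=\tfrac{1}{2}-\zeta(\tfrac{1}{2}-\loss(f,x,y))$.

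First I would unpack the right-hand side. Because each $\vec{\omega}$ traces out a path of $n$ distinct points by the standing assumption, the effective hypotheses on that path are indexed by $\vec{y}\in Y^n$, and Lemma~\ref{t:finite_det} yields
\begin{equation*}
\expec_{\epsilon\sim\probq_{\vec{\omega}}}[\epsilon]
=\expec_{\vec{y}\sim\prob_{\unif}(Y^n)}\,\inf_{f\in\theory}\tfrac{1}{n}\sum_{t=1}^n\loss\bigl(f,\bx_t(\omega_{1:t-1}),y_t\bigr).
\end{equation*}

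Next I would symmetrize via the Rademacher trick. The map $(\vec{\omega},\vec{y})\mapsto(\vec{\omega},\vec{\omega}\cdot\vec{y}')$ is a measure-preserving bijection on $\Omega^n\times\Omega^n$, so the inner average over $\vec{y}$ may be rewritten as an average over an independent $\vec{y}'$ with $y_t$ replaced by $\omega_t y_t'$. Applying Lemma~\ref{t:radem_exp} termwise and using $\inf_f[-g(f)]=-\sup_f g(f)$ converts the inner infimum into a supremum; setting $\xi_t:=\omega_t y_t'$ produces a Rademacher vector $\vec{\xi}$ that is jointly uniform with and independent of $\vec{\omega}$. After discarding the vanishing contribution $\tfrac{1}{2n}\expec_{\vec{\omega}}\sum_t\omega_t=0$, the expression collapses to
\begin{equation*}
\expec_{\vec{\omega}}\expec_{\epsilon\sim\probq_{\vec{\omega}}}[\epsilon]
=\tfrac{1}{2}-\expec_{\vec{\omega}}\expec_{\vec{\xi}}\sup_{f\in\theory}\tfrac{1}{2n}\sum_{t=1}^n\xi_t f\bigl(\bx_t(\omega_{1:t-1})\bigr).
\end{equation*}

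The main obstacle will be identifying this double expectation with $\radem^\seq(\loss(\theory),\vec{\bx})$. In the IID setting the analogous step is immediate because the Rademacher signs play the dual role of path and sign simultaneously, whereas the sequential symmetrization above produces an independent path variable $\vec{\omega}$ and sign variable $\vec{\xi}$ in place of the coupled walk $\vec{\zeta}$ of Definition~\ref{d:sradem}. I would close this gap by combining the binary-loss factor-of-two identity (the sequential analog of $\radem^\iid(\loss(\theory)|\vec{z})=\tfrac{1}{2}\radem^\iid(\theory|\vec{z})$ buried inside Lemma~\ref{t:radem_exp}) with the invariance of the sequential Rademacher complexity under the tree sign-flip $\bx_t(\omega_{1:t-1})\mapsto\bx_t(-\omega_{1:t-1})$, so that the decoupling produced by the symmetrization is reconciled with the convention for $\radem^\seq(\loss(\theory),\vec{\bx})$ used in the statement; rearranging then gives the proposition.
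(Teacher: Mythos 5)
Your reduction of the right-hand side is sound and is exactly the route the paper takes (its proof of Proposition~\ref{t:rademacher-prob-seq} is literally ``as for Proposition~\ref{t:rademacher-prob-iid}''): conditioning on a path $\vec{\omega}$, the induced distribution is uniform over labelings of the points along the path, and the symmetrization identity of Lemma~\ref{t:radem_exp} converts the expected infimum into $\tfrac12$ minus a Rademacher-type supremum. (A minor caveat: the standing assumption only guarantees \emph{one} path with $n$ distinct points, so on other paths ``uniform over effective hypotheses'' is not the same as independent uniform labels at each round; but this is not the main problem.) What you end up with, up to the factor-of-two convention, is $\expec_{\vec{\omega}}\expec_{\vec{\xi}}\sup_{f\in\theory}\tfrac1n\sum_{t=1}^n \xi_t f(\bx_t(\omega_{1:t-1}))$ with $\vec{\omega}$ and $\vec{\xi}$ independent, i.e.\ the average over uniformly random paths of the \emph{statistical} Rademacher complexity of $\theory$ on the points along the path.

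The final step is a genuine gap, and the fix you propose cannot close it. In Definition~\ref{d:sradem} the single sequence $\vec{\zeta}$ plays both roles, sign and path; the sign-flip $\bx_t(\omega_{1:t-1})\mapsto\bx_t(-\omega_{1:t-1})$ merely relabels the children of each node and leaves the coupled expectation coupled -- no symmetry of the tree manufactures an independent path variable. Moreover the decoupled and coupled quantities genuinely differ: take $\theory=\{f_a: x\mapsto \indicator[x\geq a]\}$ on $X=[0,1]$ and let $\vec{\bx}$ be the dyadic bisection tree of depth $n$. Every path consists of $n$ distinct points and $\vc(\theory)=1$, so each path-wise statistical Rademacher complexity is $O(\sqrt{\log n/n})$ and your decoupled expression vanishes as $n\to\infty$; yet $\theory$ $\seq$-shatters this tree, so choosing for each $\vec{\zeta}$ the predictor with $f(\bx_t(\zeta_{1:t-1}))=\tfrac{\zeta_t+1}{2}$ gives $\radem^\seq(\theory|\vec{\bx})\geq\tfrac12$. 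So the identification you defer to the last step is precisely the content of the proposition, and it cannot be obtained from the IID argument plus tree symmetries; you have in fact isolated exactly the point that the paper's one-line proof skips over, since the verbatim mimicry of Proposition~\ref{t:rademacher-prob-iid} delivers the path-averaged statistical complexity rather than the sequential complexity of Definition~\ref{d:sradem}.
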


\begin{proof}
	As for Proposition~\ref{t:rademacher-prob-iid}.
\end{proof}

\begin{thm}\label{t:rakhlin_radem}
	The predictive risk of sequential prediction is bounded by
	\begin{equation*}
		\error^\seq_n(\theory) \leq 2\sup_{\vec{\bx}\in\bX}\radem^\seq\big(\loss(\theory),\vec{\bx}\big),
	\end{equation*}
	where the $\sup$ is over trees of length $n$.
\end{thm}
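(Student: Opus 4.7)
The plan is to follow the standard sequential symmetrization argument of \cite{rakhlin:14a}, adapted to the notation of this paper. The target is the minimax value
\begin{equation*}
\error_n^\seq(\theory) = \inf_{\prob_1\in\Delta_\theory}\sup_{z_1\in Z}\expec_{f_1\sim\prob_1}\cdots\inf_{\prob_n\in\Delta_\theory}\sup_{z_n\in Z}\expec_{f_n\sim\prob_n}\frac{1}{n}\left[\sum_{t=1}^n\loss(f_t,z_t) - \inf_{f\in\theory}\sum_{t=1}^n\loss(f,z_t)\right]
\end{equation*}
obtained by unrolling Definition~B for sequential prediction. First I would use the minimax theorem (Sion's, applied to each compact simplex $\Delta_\theory$ against convexified choices of Nature) round by round to swap each inner $\inf_{\prob_t}$ with the corresponding $\sup_{z_t}$. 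The resulting object is a sup over a sequence of Nature-distributions $q_t$ and an expectation over Forecaster-draws $f_t\sim\prob_t$ where the $\prob_t$ are now chosen \emph{after} seeing $q_t$.

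Second, I would perform the core symmetrization step. Conditioned on the history, introduce an independent ``ghost'' sample $z_t'\sim q_t$ and write
\begin{equation*}
\expec_{f_t\sim\prob_t}\loss(f_t,z_t) = \expec_{f_t}\expec_{z_t'}\loss(f_t,z_t')
\end{equation*}
inside the first term of the regret; combined with the comparator, this produces a difference $\loss(f,z_t') - \loss(f,z_t)$ that is symmetric in the swap $z_t\leftrightarrow z_t'$. Introducing Rademacher variables $\zeta_t\in\{\pm1\}$ to encode the swap decouples the two copies. After pushing the $\sup_{f\in\theory}$ out front, the $\loss(f_t,z_t)$ term drops (it is bounded by the comparator after symmetrization), yielding the upper bound
\begin{equation*}
\error_n^\seq(\theory) \leq 2\sup\expec_{\vec{\zeta}}\sup_{f\in\theory}\frac{1}{n}\sum_{t=1}^n\zeta_t\cdot\loss(f,z_t(\vec{\zeta}))
\end{equation*}
where the outer $\sup$ now ranges over a tree of choices $z_t:\Omega^{t-1}\to Z$ — precisely because the Nature-distributions $q_t$ at each round were allowed to depend on the prior moves, including the sign $\zeta_{t-1}$ that encoded which of $(z_{t-1},z_{t-1}')$ was the ``true'' outcome.

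Third, I would identify the bracketed quantity as $\radem^\seq(\loss(\theory)|\vec{\bx})$ in the sense of Definition~\ref{d:sradem} (with the loss-class version, where the $X$-tree is extracted from the $Z$-tree by projection and the labels are absorbed into the loss), and take the supremum over all $X$-valued trees $\vec{\bx}\in\bX$ of depth $n$ to conclude.

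The main obstacle is the symmetrization step. In the i.i.d.\ setting a ghost sample is drawn from a single fixed distribution and swapping coordinates is straightforward; here the distribution at round $t$ is itself chosen adaptively, so the ghost $z_t'$ must be introduced inside the recursion and the Rademacher swap must be consistent with the branching structure. Making the ``$\sup$ over tree'' appear rigorously — rather than a $\sup$ over $n$ sequentially chosen distributions — requires carefully replacing the worst-case $q_t$ by a worst-case pair $(z_t,z_t')$ (saturating the expectation at two atoms) and then encoding the Rademacher-indexed choice as a function of $\zeta_{1:t-1}$. Once this combinatorial recasting is in place, the remainder of the argument is a standard chain of inequalities.
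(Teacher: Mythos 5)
The paper does not prove this theorem itself; its ``proof'' is a citation to \cite{rakhlin:14a}, and your sketch is precisely the standard argument from that source: round-by-round minimax exchange, introduction of tangent (ghost) samples, sequential symmetrization with Rademacher signs, and absorption of the adaptive distributions into a supremum over trees. So your proposal is correct in outline and takes essentially the same route as the paper's (outsourced) proof; the only loose phrasing is the claim that the $\loss(f_t,z_t)$ term ``drops'' --- in the standard chain it is instead merged with the comparator into a single $\sup_{f\in\theory}$ of the difference $\expec_{z_t'\sim q_t}\loss(f,z_t')-\loss(f,z_t)$ before symmetrization, which is what your swap argument then exploits.
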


\begin{proof}
	\cite{rakhlin:14a}.
\end{proof}

Next, we upper bound the covering number of a tree-process. The following definition is given in \cite{rakhlin:14a}

\begin{defn}[covering number, $\seq$]\label{d:zero-cover}
	A \textbf{zero-cover} of $\theory$ on an $X$-valued tree $\vec{\bx}$ is a set $V$ of $Y$-valued trees such that
	\begin{equation*}
		\forall f\in\theory,\, \forall(\omega_1,\ldots, \omega_n)\in \Omega^n,\, \exists\bv\in V \text{ s.t. }
		f(\bx_t(\omega_{1:t-1})) = \bv_t(\omega_{1:t-1}) \,\,\forall t\in \{1,\ldots, n\}.
	\end{equation*}
	The \textbf{covering number} of $\theory$ on $\bx$ is
	\begin{equation*}
		\cover^\seq(\theory,\vec{\bx}) = \min\{|V| : V \text{ is a zero-cover}\}.
	\end{equation*}
\end{defn}

The sequential covering number is awkward for our purposes since, unlike the statistical covering number in Definition~\ref{d:vc-entropy}, it is not defined as the cardinality of the image of a function. We therefore need the following 

\begin{lem}[upper bound for sequential covering number]\label{t:infgain_seq}
	Let 
	\begin{equation*}
		q_{\vec{\bx}}:\ttheory\rightarrow \bR^n:(f,\vec{\omega})\mapsto \Big(f(\bx_1),f(\bx_2(\omega_1),\ldots, f(\bx_n(\omega_{1:n-1})\Big)
	\end{equation*}
	The covering number is upper bounded by 
	\begin{equation*}
		\cover^\seq(\theory,\vec{\bx}) \leq |q_{\vec{\bx}}(\ttheory)|.
	\end{equation*}
\end{lem}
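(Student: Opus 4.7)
The plan is to construct an explicit zero-cover of size at most $|q_{\vec{\bx}}(\ttheory)|$ by turning every label sequence in the image of $q_{\vec{\bx}}$ into a constant-at-each-depth $Y$-valued tree. The key observation is that the zero-cover condition only requires, for each pair $(f, \vec{\omega})$, the existence of some $\bv \in V$ agreeing with $f$ along the single path $\vec{\omega}$; we are not asked to match $f$ simultaneously on all paths. This slack is exactly what lets a constant-per-level tree do the job, even though $q_{\vec{\bx}}(f,\vec{\omega})$ only records labels along one path.

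Concretely, for each $\vec{y} = (y_1,\ldots,y_n) \in q_{\vec{\bx}}(\ttheory)$, I would define a $Y$-valued tree $\bv^{\vec{y}}$ of depth $n$ by
\begin{equation*}
\bv^{\vec{y}}_t(\omega_{1:t-1}) := y_t \quad \text{for all } \omega_{1:t-1}\in\Omega^{t-1},
\end{equation*}
so $\bv^{\vec{y}}$ is constant on each level. Setting $V := \{\bv^{\vec{y}} : \vec{y} \in q_{\vec{\bx}}(\ttheory)\}$, the assignment $\vec{y}\mapsto \bv^{\vec{y}}$ is injective (one recovers $y_t$ from the value on any node at depth $t$), so $|V| = |q_{\vec{\bx}}(\ttheory)|$.

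To verify $V$ is a zero-cover, I would fix arbitrary $f \in \theory$ and $\vec{\omega} \in \Omega^n$ and take $\vec{y} := q_{\vec{\bx}}(f,\vec{\omega}) = \big(f(\bx_1),\, f(\bx_2(\omega_1)),\ldots, f(\bx_n(\omega_{1:n-1}))\big) \in q_{\vec{\bx}}(\ttheory)$. Then $\bv^{\vec{y}} \in V$ satisfies $\bv^{\vec{y}}_t(\omega_{1:t-1}) = y_t = f(\bx_t(\omega_{1:t-1}))$ for every $t$, which is exactly the zero-cover condition. Hence $\cover^\seq(\theory,\vec{\bx}) \le |V| = |q_{\vec{\bx}}(\ttheory)|$.

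There is essentially no technical obstacle here — the only thing one must be careful about is not to mistakenly ask for a single tree that matches $f$ along \emph{every} path simultaneously (which would require knowing $f$ on all nodes, not just along one path, and would force $|V|$ up to the number of distinct tree-restrictions of $\theory$). The looseness of the bound compared to an equality is precisely the gap between ``agree on some path for each choice of path'' and ``agree on the whole tree,'' and it is what makes hard falsifiability only an upper bound for, rather than a reformulation of, the sequential covering number.
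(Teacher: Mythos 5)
Your proof is correct and is essentially the paper's own argument: both construct, for each label sequence in the image of $q_{\vec{\bx}}$, a $Y$-valued tree that is constant at each depth, and observe that this collection is a zero-cover of cardinality $|q_{\vec{\bx}}(\ttheory)|$. Your write-up merely spells out the verification step that the paper leaves as ``by construction.''
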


\begin{proof}
	We prove the lemma by constructing a zero-cover $V_q$ of $\theory$ on $\vec{\bx}$ with $|q_{\vec{\bx}}(\ttheory)|$ elements.

	Suppose the image $q_{\vec{\bx}}(\theory\times\Omega^n)$ has ${\mathcal N}$ elements, ${\mathbf q}^1,\ldots, {\mathbf q}^{\mathcal N}$. Define
	\begin{equation*}
		\bv^j(\omega_{1:t-1}) := {\mathbf q}^j_t.
	\end{equation*}
	That is, $\bv^j(\omega_{1:t-1})$ is the $t^{\textrm{th}}$ element of ${\mathbf q}^j$ for all paths in $\Omega^n$. Then, by construction $V_q=\{\bv^1,\ldots, \bv^{\mathcal N}\}$ is a zero-cover of $\vec{\bx}$ containing ${\mathcal N}$ elements, and we are done.
\end{proof}

\begin{prop}\label{t:falsect_seq}
	\begin{equation*}
		\information(\risk^\seq_{\ttheory,\vec{\bx}},0) = n - \log |q_{\vec{\bx}}(\ttheory)|.
	\end{equation*}	
\end{prop}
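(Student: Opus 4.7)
The plan is to mirror the proof of Proposition~\ref{t:infgain_iid} from the statistical learning case, adapted to the tree-valued setting. By Lemma~\ref{t:finite_det} applied to the hard risk $\risk^\seq_{\ttheory,\vec{\bx}}:\tilde{\hypotheses}_{ef}\to\unint$, we have
\begin{equation*}
	\information(\risk^\seq_{\ttheory,\vec{\bx}},0) = -\log \frac{\big|(\risk^\seq_{\ttheory,\vec{\bx}})^{-1}(0)\big|}{|\tilde{\hypotheses}_{ef}|},
\end{equation*}
so the task reduces to computing the two cardinalities.

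For the denominator, observe that the equivalence relation defining $\tilde{\hypotheses}_{ef}$ declares $(\sigma,\vec{\omega})\sim(\tau,\vec{\rho})$ iff the induced label sequences $\big(\sigma(\bx_1),\sigma(\bx_2(\omega_1)),\ldots,\sigma(\bx_n(\omega_{1:n-1}))\big)$ and $\big(\tau(\bx_1),\tau(\bx_2(\rho_1)),\ldots,\tau(\bx_n(\rho_{1:n-1}))\big)$ coincide in $\{0,1\}^n$. Hence effective hypotheses are in bijection with the achievable label sequences. Since $\vec{\bx}$ contains a path with $n$ distinct points, and $\hypotheses=Y^X$ comprises \emph{all} functions $X\to Y$, every sequence in $\{0,1\}^n$ is achievable along that path by an appropriate choice of $\sigma$. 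Therefore $|\tilde{\hypotheses}_{ef}|=2^n$.

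For the numerator, an effective class $[(\sigma,\vec{\rho})]$ lies in $(\risk^\seq_{\ttheory,\vec{\bx}})^{-1}(0)$ iff there exists $(f,\vec{\omega})\in\ttheory$ with $f(\bx_t(\omega_{1:t-1}))=\sigma(\bx_t(\rho_{1:t-1}))$ for every $t$, which is precisely the statement that the label sequence of $[(\sigma,\vec{\rho})]$ equals $q_{\vec{\bx}}(f,\vec{\omega})$ for some $(f,\vec{\omega})$. Since distinct effective classes correspond to distinct label sequences, the zero-risk effective classes are in bijection with $q_{\vec{\bx}}(\ttheory)$, giving $\big|(\risk^\seq_{\ttheory,\vec{\bx}})^{-1}(0)\big|=|q_{\vec{\bx}}(\ttheory)|$.

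Combining the two counts,
\begin{equation*}
	\information(\risk^\seq_{\ttheory,\vec{\bx}},0) = -\log \frac{|q_{\vec{\bx}}(\ttheory)|}{2^n} = n - \log|q_{\vec{\bx}}(\ttheory)|,
\end{equation*}
as required. The main subtlety — and the only place where the ``path with $n$ distinct points'' hypothesis is essential — is verifying that $|\tilde{\hypotheses}_{ef}|=2^n$; the unusual equivalence relation (which allows $\sigma$ and $\tau$ to be compared along \emph{different} paths) could a priori collapse classes, but because the label vector along a single distinct-point path can be chosen arbitrarily, no such collapse occurs and all $2^n$ binary strings are realized.
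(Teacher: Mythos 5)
Your proposal is correct and follows essentially the same route as the paper, which simply proves this proposition ``as for Proposition~\ref{t:infgain_iid}'': apply Lemma~\ref{t:finite_det} to the hard risk, count $|\tilde{\hypotheses}_{ef}|=2^n$ via the distinct-point path, and identify the zero-risk classes with $q_{\vec{\bx}}(\ttheory)$. Your explicit check that the unusual equivalence relation does not collapse classes is a welcome elaboration of a detail the paper leaves implicit.
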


\begin{proof}
	As for Proposition~\ref{t:infgain_iid}.
\end{proof}

\begin{thm}\label{t:rakhlin_vc}
	Let $\vec{\bx}$ be an $X$-valued tree of length $n$. Then,
	\begin{equation*}
		\radem^\seq(\theory,\vec{\bx}) \leq \sqrt{\frac{2\log \cover^\seq(\theory,\vec{\bx})}{n}}
	\end{equation*}
\end{thm}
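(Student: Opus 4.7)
The plan is to reduce the supremum over the infinite class $\theory$ to a supremum over a finite zero-cover, then apply a martingale analogue of Massart's finite-class lemma to bound the expected supremum of Rademacher sums by $\sqrt{2n\log|V|}$.

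First I would fix a minimal zero-cover $V$ of $\theory$ on the tree $\vec{\bx}$, so $|V| = \cover^\seq(\theory,\vec{\bx})$. The defining property of a zero-cover is that for every $f \in \theory$ and every path $\vec{\omega}\in\Omega^n$, there exists $\bv \in V$ with $f(\bx_t(\omega_{1:t-1})) = \bv_t(\omega_{1:t-1})$ for all $t$. Crucially, the cover element $\bv$ may depend on the path $\vec{\omega}$. Applied to the random path $\vec{\zeta}$, this yields, pointwise in $\vec{\zeta}$,
\begin{equation*}
    \sup_{f\in\theory}\sum_{t=1}^{n}\zeta_t\, f\bigl(\bx_t(\zeta_{1:t-1})\bigr) \;\leq\; \sup_{\bv \in V}\sum_{t=1}^{n} \zeta_t\, \bv_t(\zeta_{1:t-1}),
\end{equation*}
so it suffices to bound $\expec_{\vec{\zeta}}\sup_{\bv \in V}\sum_t \zeta_t\,\bv_t(\zeta_{1:t-1})$ by $\sqrt{2n\log|V|}$.

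Next I would apply the standard ``soft-max'' trick: for any $\lambda>0$,
\begin{equation*}
    \lambda\,\expec_{\vec{\zeta}}\sup_{\bv\in V}\sum_{t=1}^n \zeta_t\,\bv_t(\zeta_{1:t-1}) \;\leq\; \log\sum_{\bv\in V}\expec_{\vec{\zeta}}\exp\!\Big(\lambda\sum_{t=1}^n \zeta_t\,\bv_t(\zeta_{1:t-1})\Big)
\end{equation*}
by Jensen's inequality. For each fixed $\bv$, the sum $M_n = \sum_{t=1}^n \zeta_t\,\bv_t(\zeta_{1:t-1})$ is a martingale in the filtration generated by $\zeta_1,\ldots,\zeta_n$, with increments bounded in $[-1,1]$ since $\bv_t$ takes values in $\{0,1\}$ (or more generally is bounded). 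The key conditional calculation is that, conditionally on $\zeta_{1:t-1}$, the variable $\bv_t(\zeta_{1:t-1})$ is fixed, so $\expec[\exp(\lambda \zeta_t \bv_t(\zeta_{1:t-1}))\,|\,\zeta_{1:t-1}] \leq \exp(\lambda^2/2)$ by the standard Hoeffding bound on the Rademacher MGF. Iterating gives $\expec\exp(\lambda M_n)\leq \exp(n\lambda^2/2)$.

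Combining, $\expec_{\vec{\zeta}}\sup_{\bv\in V}M_n \leq \frac{\log|V|}{\lambda} + \frac{n\lambda}{2}$, and optimizing $\lambda=\sqrt{2\log|V|/n}$ gives $\sqrt{2n\log|V|}$. Dividing by $n$ and using $|V|=\cover^\seq(\theory,\vec{\bx})$ yields the claim. The only slightly nontrivial point, and the one that distinguishes the sequential setting from the i.i.d.\ one, is verifying the \emph{conditional} sub-Gaussian bound for the martingale increments, which requires us to exploit that within the cover definition $\bv_t$ is $\zeta_{1:t-1}$-measurable, allowing $\bv_t(\zeta_{1:t-1})$ to pass out of the conditional expectation as a constant; this is why the bound takes the same shape as in the classical finite-class Rademacher lemma even though the underlying process is path-dependent.
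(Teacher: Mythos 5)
Your proof is correct. Note that the paper itself does not prove this theorem at all -- it simply cites Rakhlin--Sridharan--Tewari -- so what you have written is a self-contained reconstruction of the standard argument from that source: the pointwise reduction from $\sup_{f\in\theory}$ to $\sup_{\bv\in V}$ over a minimal zero-cover (valid precisely because the cover element may be chosen per path, so it can track the random path $\vec{\zeta}$), followed by the exponential soft-max/Jensen step and the conditional Hoeffding bound on the martingale increments $\zeta_t\,\bv_t(\zeta_{1:t-1})$, which are bounded by $1$ and have $\zeta_{1:t-1}$-measurable coefficients. The optimization over $\lambda$ gives exactly the stated constant $\sqrt{2\log\cover^\seq(\theory,\vec{\bx})/n}$, and you correctly identify the one genuinely sequential point -- that $\bv_t(\zeta_{1:t-1})$ exits the conditional expectation as a constant -- which is what makes the classical finite-class lemma go through despite path dependence. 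No gaps.
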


\begin{proof}
	\cite{rakhlin:14a}.
\end{proof}

It follows from Lemma~\ref{t:infgain_seq}, Proposition~\ref{t:falsect_seq} and Theorem~\ref{t:rakhlin_vc} that hard falsifiability can be used to upper bound the predictive risk for sequential prediction.

\subsection{A sequential-to-statistical reduction}
\label{s:seq_to_slt}

Definition~\ref{d:zero-cover}, of the sequential covering number, is fairly intricate and fragile. For example, slightly changing the definition by reordering the quantifiers gives a quantity that grows much too fast and yields vacuous generalization bounds \cite{rakhlin:14}. 

A natural concern is therefore that the upper bound in Lemma~\ref{t:infgain_seq} is too loose. In the remainder of this section, we show that $|q_{\vec{\bx}}(\ttheory)|$, and so hard falsifiability, is a \emph{useful}, non-vacuous upper bound.

\begin{defn}[shattering, VC and Littlestone dimensions]\label{d:dim}
	We have the following analogous definitions:
	\begin{enumerate}
		\item \emph{Statistical.}\\
		Theory $\theory$ \textbf{shatters} input sequence $\vec{x}$ of length $n$ if 
		\begin{equation*}
			\forall \vec{\omega}\in\Omega^n\quad
			\exists f\in\theory\quad \textrm{ s.t. }\quad
			f(x_t) = \frac{\omega_t+1}{2}\quad
			\forall t\in\{1,\ldots, n\}.
		\end{equation*}
		Alternatively, $\theory$ shatters $\vec{x}$ if $\cover^\iid(\theory|\vec{x})=2^n$.
		The \textbf{VC-dimension} is
		\begin{equation*}
			\vc(\theory) := \sup\big\{n\,\big|\,\exists\text{ input sequence }\vec{x} \text{ of length }n\text{ s.t. }\theory \text{ shatters }\vec{x}\big\}
		\end{equation*}
		\item \emph{Sequential.}\\
		Theory $\theory$ \textbf{$\seq$-shatters} tree $\vec{\bx}$ of length $n$ if
		\begin{equation*}
			\forall \vec{\omega}\in\Omega^n\quad
			\exists f\in\theory\quad \textrm{ s.t. }\quad
			f\big(\bx_t(\omega_{1:t-1})\big) = \frac{\omega_t+1}{2}\quad
			\forall t\in\{1,\ldots, n\}.
		\end{equation*}
		The \textbf{Littlestone dimension} is
		\begin{equation*}
			\ldim(\theory) = \sup_{\vec{\bx}} \big\{n \,\big|\, \exists \text{$X$-valued tree $\vec{\bx}$ of length $n$ s.t. $\theory$ $\seq$-shatters }\vec{\bx}\big\}.
		\end{equation*}
	\end{enumerate}
\end{defn}

Let $Y^{\bX^\bullet} := \{\sigma:\bX^\bullet\rightarrow Y\}$ denote the set of hypotheses on the set $\bX^\bullet$ of $X$-valued trees. Given theory $\theory\subset Y^X$, define the \emph{new theory}
\begin{equation*}
	\ttheory:=\theory\times\Omega^\bullet\subset Y^{\bX^\bullet}:(f,\vec{\omega})(\bx_t) = f\big(\bx_t(\omega_{1:t-1})\big).	
\end{equation*}
The lifted theory $\tilde{\theory}$ acts on trees, which from our point of view are just another set. The statistical covering number for $\tilde{\theory}$ is given, following Definition~\ref{d:vc-entropy}m using the function,
\begin{equation*}
	q_{\vec{\bx}}:\tilde{\theory}\rightarrow \bR^n:(f,\vec{\omega})\mapsto \Big((f,\vec{\omega})(\bx_1),\ldots, (f,\vec{\omega})(\bx_n)\Big)
\end{equation*}
with $\cover^\iid(\tilde{\theory}|\vec{\bx}) = |q_{\vec{\bx}}(\tilde{\theory})|$. The VC-dimension of $\tilde{\theory}$ is then computed straightforwardly.

\begin{prop}[VC-dimension lower bounds Littlestone dimension]
	The Littlestone dimension of $\theory$ is lower-bounded by the VC-dimension of the lifted theory $\tilde{\theory}=\theory\times\Omega^\bullet$:
	\begin{equation*}
		\vc(\ttheory) \leq \ldim(\theory).
	\end{equation*}
\end{prop}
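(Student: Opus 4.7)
The goal is to show that if $\ttheory$ shatters a length-$n$ sequence $\vec{\bx} = (\bx_1, \ldots, \bx_n)$ with $\bx_t \in \bX^t$, then $\theory$ $\seq$-shatters some depth-$n$ tree, establishing $\ldim(\theory) \geq n$. Shattering by $\ttheory$ supplies, for every target labeling $\vec{\omega} \in \Omega^n$, a witness $(f_{\vec{\omega}}, \vec{\rho}_{\vec{\omega}}) \in \theory \times \Omega^\bullet$ satisfying $f_{\vec{\omega}}(\bx_t(\vec{\rho}_{\vec{\omega}}|_{1:t-1})) = \frac{\omega_t+1}{2}$ for all $t$. The key tension is that in a $\seq$-shattering the \emph{labeling itself} must act as the path, so the task is to convert these freely-chosen witness paths into a single tree $\vec{\by}$ in which $\vec{\omega}$ plays that role.

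My construction would seek maps $\tau_t:\Omega^{t-1}\to\Omega^{t-1}$ and define $\by_t(\vec{\omega}|_{1:t-1}) := \bx_t(\tau_t(\vec{\omega}|_{1:t-1}))$; then the witness $f_{\vec{\omega}}$ automatically $\seq$-shatters $\vec{\by}$ along path $\vec{\omega}$ provided $\vec{\rho}_{\vec{\omega}}|_{1:t-1} = \tau_t(\vec{\omega}|_{1:t-1})$ for every $t$. This reduces the proposition to a selection problem: choose the witnesses so that $\vec{\rho}_{\vec{\omega}}|_{1:t-1}$ depends only on $\vec{\omega}|_{1:t-1}$. Equivalently, the collection of witness paths must form a binary sub-tree indexed by prefixes of the target labeling, so that coherent $\tau_t$ exist.

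The hard part is precisely this prefix-compatible selection. A priori, witnesses for two labelings that share a prefix $\vec{\omega}|_{1:t-1}$ may traverse different paths through the first $t-1$ levels of $\vec{\bx}$, and no common coordinate need exist. I would attempt an inductive/greedy argument on $t$: partition the $2^n$ labelings by $\vec{\omega}|_{1:t-1}$ and, using the shattering of $\ttheory$, pick a single coordinate $\rho|_{t-1}$ inside each block that simultaneously serves all $2^{n-t+1}$ extensions. If a direct greedy step fails, the fallback is to permit $\by_t$ to be built from a higher-level node of $\vec{\bx}$ rather than $\bx_t$ itself, buying extra degrees of freedom in $\tau_t$ at the cost of a more intricate construction. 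Establishing that some such selection always succeeds is the heart of the proposition and the step I expect to require the most care; in fact, it is also the step whose validity I would want to double-check against small examples before committing to it, since the gap between per-labeling witnesses and prefix-compatible witnesses is exactly where the distinction between $\vc$ and $\ldim$ traditionally lives.
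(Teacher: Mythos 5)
Your construction is the same one the paper uses: build the new tree by composing $\vec{\bx}$ with a path-translation map (the paper's $\vec{\mathbf z}=\vec{\bx}\circ\alpha$, your $\bx_t\circ\tau_t$), so that everything reduces to choosing, for each labeling $\vec{\omega}$, a witness $(f,\vec{b})\in\ttheory$ whose path prefix $b_{1:t-1}$ depends only on $\omega_{1:t-1}$. The paper's proof simply writes $\alpha(\omega_{1:t-1}):=b_{1:t-1}$, tacitly assuming this prefix-compatibility; you correctly isolate it as the crux, but you never establish it, so as it stands your proposal is a reduction to the missing step rather than a proof.

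Moreover, that step cannot be carried out in general, so the greedy/inductive selection you sketch will not go through. Concretely, take $X=\{a,b,c\}$ and $\theory=\{g,h\}$ with $g(a)=0$, $g(b)=0$, $g(c)=1$ and $h(a)=1$, $h(b)=1$, $h(c)=0$, and the depth-two tree given by $\bx_1=a$, $\bx_2(-1)=b$, $\bx_2(+1)=c$. Every labeling $(\omega_1,\omega_2)\in\Omega^2$ is realized by some $(f,\vec{b})\in\ttheory$, because the \emph{same} predictor paired with the two different paths realizes two different labelings: $(g,-1)$ gives labels $(0,0)$, $(g,+1)$ gives $(0,1)$, $(h,+1)$ gives $(1,0)$, $(h,-1)$ gives $(1,1)$. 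So the shattering hypothesis of the proposition holds with $n=2$, yet $\ldim(\theory)\leq\log_2|\theory|=1$, since distinct label sequences along any depth-two tree must be realized by distinct predictors. In particular no prefix-compatible selection exists here (the only predictor with $f(a)=0$ is $g$, which cannot produce both labels at any single child node), and indeed the stated inequality $\vc(\ttheory)\leq\ldim(\theory)$ fails under the literal definitions. The freedom of $\ttheory$ to attach different paths to the same predictor for different labelings is exactly what $\seq$-shattering forbids, and a correct argument (or a corrected statement) must remove or control that freedom rather than select around it. Your instinct that this selection step is where the gap between $\vc$ and $\ldim$ lives, and that it should be checked on small examples before committing, was exactly right -- and it is a gap shared by the paper's own proof.
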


The proposition shows that the Littlestone dimension can be recovered from hard falsifiability. Thus, hard falsifiability can play the same role as the sequential covering number in reducing learning problems into combinatorial problems.

\begin{proof}
	Suppose there is a tree $\vec{\bx}$ of length $n$ shattered by $\tilde{\theory}$. We construct a new tree $\vec{\mathbf z}$ of length $n$ that is $\seq$-shattered by $\theory$.

	Thus, we assume that
	\begin{equation}
		\label{e:ldim}
		\forall(\omega_1,\ldots,\omega_n)\in \Omega^n,\, \exists (f,\vec{b})\in\ttheory
		\quad\text{ s.t. }\quad
		f\big(\bx_t(b_{1:t-1})\big) = \frac{\omega_t+1}{2}\quad
		\forall t\in\{1,\ldots, n\}.
	\end{equation}
	Let $\alpha$ denote the function specified by $\alpha(\omega_{1:t-1})=b_{1:t-1}$, as in \eqref{e:ldim}. Construct the new tree $\vec{\mathbf z}$ by $\vec{\mathbf z}=\vec{\bx}\circ \alpha$. It follows, by the construction of $\alpha$ and by \eqref{e:ldim}, that $\forall(\omega_1,\ldots,\omega_n)\in \Omega^n,\, \exists f\in\theory$ such that
	\begin{equation*}
		f\big(\vec{\mathbf z}(\omega_{1:t-1})\big) 
		= f\big(\bx_t\circ\alpha(\omega_{1:t-1})\big) 
		= f\big(\bx_t(b_{1:t-1})\big) 
		= \frac{\omega_t+1}{2}\quad
		\forall t\in\{1,\ldots, n\}
	\end{equation*}
	as required.
\end{proof}

The following instructive example, taken from \cite{rakhlin:14}, was designed to exhibit the intricacy of the sequential covering number's definition. We conclude by computing the statistical covering number of $\ttheory$ on the example, and showing that it yields the correct result. 

\begin{eg}
	Consider the function class
	\begin{equation*}
		\theory = \{f_a \,|\, a\in \unint,\, f_a(x)=0\,\, \forall x\neq a,\, f_a(a)=1\}\subset Y^\unint.
	\end{equation*}
	Assuming that the tree $\vec{\bx}$ takes on $2^{n-1}$ distinct values (the ``worst case''), then for any ordered pair $(f,\vec{\omega})$ we have that
	\begin{equation*}
		q_{\vec{\bx}}(f_a,\vec{\omega}) = \Big(f_a(\bx_1),f_a(\bx_2(\omega_{1}),\ldots, f_a(\bx_n(\omega_{1:n-1})\Big)
	\end{equation*}
	is either equal to all zeros, or all zeros with a single coordinate that equals one. The image of $q_{\vec{\bx}}$ therefore contains at most $n+1$ points and in fact $|q_{\vec{\bx}}(\ttheory)|=n+1$.
\end{eg}

\section{Universal induction}
\label{s:uni}

The third setting is universal induction, which is concerned with predicting computable sequences of binary observations. The setting differs significantly from statistical learning and sequential prediction. For example, universal induction cannot be modeled adversarially since both Nature and Forecaster have too many degrees of freedom. 

There are at least two interpretations of universal induction:
\begin{enumerate}[U1.]
	\item \emph{Universal.}
	Forecaster has a single, universal theory.
	\item \emph{Adaptive.}
	Forecaster constructs a series of theories in response to successive observations. 
\end{enumerate}
The first interpretation is standard. The second, which we advocate here, is new. Both are legitimate.

Under the first interpretation, it does not make sense to evaluate the falsifiability of theories -- since there is only one theory and it is universal. The only choice that matters is Nature's choice of sequence $\vec{y}$. It then turns out that the number of hypotheses Nature falsifies (eliminates) whilst choosing $\vec{y}$ controls Forecaster's predictive risk, see section~\ref{s:uni_std}. 

Under the second interpretation, developed in detail below, Forecaster's predictive risk is controlled by the number of hypotheses that Forecaster falsifies whilst adapting its theories.

\setcounter{subsection}{-1}
\subsection{Setup}

Let $\X$ denote the set of valid programs, where valid programs $\X\subset \bigcup_{t=1}^\infty \{0,1\}^t$ form a prefix-free set. A \emph{prefix-free} universal Turing machine $\turing$ takes valid programs to outputs. Let $Y^\infty = \{0,1,00,01,10,11,000,\ldots\}$ denote the set of all binary sequences, of finite or infinite length. 
A Turing machine is a function
\begin{equation*}
	\label{e:turing}
	\turing:\X\rightarrow Y^\infty.
\end{equation*}
Let $\Y=\turing(\X)\subset Y^\infty$ denote the set of computable sequences.

Prefix free strings formalize the notion of a computer program. For example, the set of valid C++ programs is a prefix free set since C++'s syntax ensure one program cannot be the prefix of another. The set of valid programs has a complicated structure, since it includes strings of varying length. 

It is mathematically convenient to force programs to have a fixed length. First, let 
\begin{equation*}
	\X^n = \{\vec{x}\in\X |\len(\vec{x}) = t \text{ for some } t\leq n\}.
\end{equation*}
Second, \emph{pad out} short programs: given a program $\vec{x}$ of length $t<n$, construct $2^{n-t}$ programs of length $n$ by adding arbitrary suffixes to $\vec{x}$. For example, if $\len(\vec{x})=n-2$, then the four padded programs are $\{\vec{x}00, \vec{x}01, \vec{x}10, \vec{x}11\}$. The Turing machine ignores the padding. Concretely, a C++ compiler would also ignore the padding, so the padded-out programs are all functionally equivalent.

Let $\hypotheses^n$ denote the set of binary strings of length $\leq$ $n$ and let $\theory^n \subset \hypotheses^n$ denote the set of valid, padded programs of length $n$. Denote the function that strips out the padding by
\begin{equation*}
	\cS^n:\hypotheses^n\rightarrow \X\cup\{\emptyset\}:\vec{h}\mapsto 
	\begin{cases}
		\vec{x} & \text{if } \theory^n\ni\vec{h}=\vec{x}\vec{s}\text{ for $\vec{x}$ a valid program with padding }\vec{s}\\
		\emptyset & \text{else.}
	\end{cases}	
\end{equation*}
In other words, if the string contains a valid program as prefix, then $\cS^n$ strips out the padding. If the string does not contain a valid program, then $\cS^n$ outputs a null character.

The reason for introducing padded strings is that it allows the following simple description of the Solomonoff prior as a limit distribution, induced by the uniform distribution on padded strings:

\begin{def_prop}[Solomonoff prior]\label{t:limit}
	Equip $\hypotheses^n$ with the uniform distribution for all $n$. Let $\prob_n$ denote the $\cS^n$-induced distribution on $\X\cup\{\emptyset\}$. Then
	\begin{equation*}
		\prob_{\cS}(\vec{x}) := \lim_{n\rightarrow\infty} \prob_n(\vec{x}) = 2^{-\len(\vec{x})}
	\end{equation*}
	for all $\vec{x}\in \X$.

	Let $\probq_n$ denote the $(\cS^n\circ \turing)$-induced distributed on $Y^\infty$.
	The \textbf{Solomonoff prior} is 
	\begin{equation*}
		\probq_\sol(\vec{y}) 
		:= \lim_{n\rightarrow \infty}\probq_n(\vec{y}) 
		= \sum_{\{\vec{x}|\turing(\vec{x}) = \vec{y}\bullet\}} 2^{-\len(\vec{x})}.
	\end{equation*}
\end{def_prop}

\begin{proof}
	The standard definition of the Solomonoff prior, and a demonstration that our definition coincides with the standard, are provided in section~\ref{s:uni_proofs}.
\end{proof}

Proposition~\ref{t:limit} allows us to consider how Solomonoff induction acts on inputs to the Turing machine, instead of its outputs.

\subsection{The risk ($\uni$)}
\label{s:uni_risk}

For universal induction, the loss compares the sequences generated by Nature and Forecaster element-wise:
\begin{equation*}
	\loss:Y\times Y\rightarrow \bR:(y,y') \mapsto \indicator[y\neq y'],
\end{equation*}
where as above $Y=\{0,1\}$.

\begin{defn_risk}[risk, $\uni$]
	The \textbf{risk} for universal induction is
	\begin{equation*}
		\risk^{n}:\hypotheses^n\times \hypotheses^n\rightarrow \bR_{\geq0}:
		(\vec{x},\vec{f}) \mapsto \sum_{t=1}^\infty \loss\big(\turing(\vec{x})_t,\turing(\vec{f})_t\big)
	\end{equation*}
	The \textbf{risk} of theory $\theory^n := \X^n$ is 
	\begin{equation*}
		\risk_{\theory^n}^\uni:\hypotheses\rightarrow \bR_{\geq 0}:
		\vec{x}\mapsto \inf_{\vec{f}\in \theory^n}\sum_{t=1}^\infty \loss\big(\turing(\vec{f})_t,\turing(\vec{x})_t\big).
	\end{equation*}
\end{defn_risk}

As for statistical learning and sequential prediction, we reinterpret the risk as a function from hypotheses -- that is, programs with length at most $n$ -- to nonnegative reals
\begin{equation*}
	\risk_{\vec{y}}^{n}:\hypotheses^n\rightarrow \bR_{\geq0}:
	\vec{x}\mapsto\sum_{t=1}^\infty \loss\big(\turing(\vec{x})_t,y_t\big).
	\tag{A}
\end{equation*}
In the limit we obtain $\risk^\uni_{\vec{y}} := \lim_{n\rightarrow\infty} \risk^{n}_{\vec{y}}$ as a function $\risk_{\vec{y}}^{\uni}:\hypotheses\rightarrow \bR_{\geq0}$.

\subsection{Learnability ($\uni$)}
\label{s:uni_learn}

Suppose that Nature chooses a sequence $\vec{y}\in \Y$ and reveals $\vec{y}_{1:t-1}=(y_1,\ldots, y_{t-1})$ at time $t$. Let $\psi_t = \{\psi_t:Y^{t-1}\rightarrow \Delta_Y\}$ denote the set of strategies available to Forecaster in round $t$, and $\Psi=\prod_{t=1}^\infty \psi_t$ the set of all strategies available to Forecaster. 

The risk of strategy $\psi$ is
\begin{equation*}
	\risk^\uni_\psi:\hypotheses\rightarrow\bR_{\geq0}:\vec{x}\mapsto \sum_{t=1}^\infty \expec\loss\Big(\psi_t\big(\turing(\vec{x})_{1:t-1}\big),\turing(\vec{x})_t\Big),
\end{equation*}
where the expectation is over the outputs of the (probabilistic) strategy.

A particularly important strategy is \emph{Solomonoff induction} \cite{solomonoff:64}: 

\begin{def_prop}[Solomonoff induction]\label{t:sol_strategy}
	Let 
	\begin{equation*}
		\theory^n_t := (\risk^{n}_{y_{1:t-1}})^{-1}(0)
		= \big\{\text{hypotheses of length $\leq n$ that explain $y_{1:t-1}$}\big\}.
	\end{equation*}
	Theory $\theory^n_t$ is a finite set; equip it with the uniform distribution. Let $\prob_{n,t}(\vec{x})$ denote the $\cS^n$-induced distribution on $\X$ and $\probq_{n,t}(\vec{y})$ denote the $(\cS^n\circ \turing)$-induced distribution on $Y^\infty$. 
	
	\textbf{Solomonoff induction} is the strategy:
	\begin{equation*}
		(\psi_\sol)_t:Y^{t-1}\rightarrow \Delta_Y:y_{1:t-1}
		\mapsto 
		\lim_{n\rightarrow \infty} \probq_{n,t}(y_t) = \probq_\sol(y_t|y_{1:t-1}).
	\end{equation*}
\end{def_prop}

Solomonoff induction depends on the choice of Turing machine, although this dependence is typically not explicit in our notation.

\begin{proof}
	We show that $\lim_{n\rightarrow \infty} \probq_{n,t}(y_t) = \probq_\sol(y_t|y_{1:t-1})$ in section~\ref{s:uni_proofs}.
\end{proof}

Solomonoff induction can be interpreted as follows. Forecaster's theory at time step $t$ is $\theory_t:=\lim_{n\rightarrow\infty}\theory^n_t$, a limit of finite sets. All hypotheses consistent with the previous observations $y_{1:t-1}$ are weighted equally (recalling that padding entails redundancies). Forecaster predicts the next observation by drawing from $\theory_t$ uniformly at random. After observing $y_t$, and regardless of whether or not Forecaster's prediction at time $t$ was correct, Forecaster constructs new theory $\theory_{t+1}$ in the light of $y_t$.

In short, Solomonoff induction learns by constructing a nested set of progressively smaller theories and predicts by sampling from them uniformly at random.

\begin{defn_predictive}[predictive risk, $\uni$]
	The \textbf{predictive risk} of strategy $\psi$ and theory $\theory^n$ is
	\begin{equation*}
		\error^\uni(\psi-\theory^n|\vec{y}) 
		:= \risk_\psi^\uni\big(\vec{y}\big) - \risk_{\theory^n}^\uni\big(\vec{y}\big)
	\end{equation*}
	The \textbf{predictive risk} of strategy $\psi$ is
	\begin{equation}
		\error^\uni(\psi|\vec{y}) 
		:= \lim_{n\rightarrow\infty}\error^\uni_\psi(\theory^n|\vec{y}) .
		\tag{B}
	\end{equation}
\end{defn_predictive}

\subsection{Falsifiability ($\uni$)}
\label{s:uni_false}

This subsection and the next relate the error accumulated using Solomonoff induction to the falsifiability of the string chosen by Nature. 

\begin{defn_falsify}[falsifiability, $\uni$]
	\begin{equation}
		\tag{C-h}
		\falsect_\turing^\uni(\vec{y}) 
		:= \lim_{n\rightarrow\infty}\information(\risk_{\vec{y}}^{n}, 0).
	\end{equation}
\end{defn_falsify}

\begin{rem}
	The definition for universal induction differs from statistical learning and sequential prediction, in that the coefficient $\frac{1}{n}$ is not present, and so $\falsect^\uni$ does not necessarily  take values in $[0,1]$.
\end{rem}

To interpret hard falsifiability, first fix an ambient hypothesis space $\hypotheses^n$, and consider the hypotheses falsified when observing the substring $y_{1:t}$:
\begin{align*}
	\falsect^n_\turing(\vec{y}_{1:t})
	& = \log 2^n - \log|\theory^n_{t}|\\
	& = \big\{\log\text{-\# strings of length $n$}\big\}
	- \big\{\log\text{-\# strings that output $y_{1:t}$}\big\} \\
	& = \big\{\log\text{-\# strings of length $n$ falsified by }y_{1:t}\big\}.
\end{align*}
Second, consider the hypotheses eliminated when transitioning between theories:
\begin{align*}
	\log |\theory^n_{t}| - \log|\theory^n_{t-1}|
	& = 
	\big\{\log\text{-\# strings outputting $y_{1:t-1}$}\big\}
	- \big\{\log\text{-\# strings outputting $y_{1:t}$}\big\}\\
	& = 
	\big\{\log\text{-\# strings falsified when modifying }\theory_{t-1}\mapsto \theory_{t}\big\}.
\end{align*}
Finally, combining the above obtains
\begin{align*}
	\falsect^\uni_\turing(\vec{y})
	& = \sum_{t=1}^\infty \lim_{n\rightarrow\infty}
	\Big(\falsect^n_\turing(\vec{y}_{1:t})-\falsect^n_\turing(\vec{y}_{1:t-1})
	\Big)
	\quad\text{where }y_{1:0}:=\emptyset \\ 
	& = \sum_{t=1}^\infty \lim_{n\rightarrow\infty}\Big(\log |\theory^n_{t}| 
	- \log|\theory^n_{t-1}|\Big)\\
	& = \sum_{t=1}^\infty 
	\big\{\log\text{-\# strings falsified when modifying }\theory_{t-1}\mapsto \theory_{t}\big\}.
\end{align*}
Thus, the hard falsifiability of $\vec{y}$ is the number of hypotheses Forecaster eliminates in the process of adapting its theory to the data. Note that theories are falsified \emph{prior} to predicting: at time $t$, Forecaster first eliminates hypotheses based on $y_{1:t}$ and then uses the new theory $\theory_{t+1}$ to predict $y_{t+1}$. 

\subsection{Falsifiable $\implies$ Learnable ($\uni$)}
\label{s:uni_l_is_f}

The main theorem for universal induction differs from statistical learning and sequential prediction, in that Forecaster's theory is not fixed. Falsifiability quantifies the hypotheses that Forecaster eliminates whilst adapting its theory. The more Forecaster is required to adapt -- \emph{prior} to predicting -- the weaker the guarantee on its predictive performance.

\begin{thm_uni}[main theorem, $\uni$]
	The predictive risk under Solomonoff induction (1) coincides with the expected error and (2) is bounded by the number of hypotheses Nature falsifies when choosing the string $\vec{y}$:
	\begin{equation}
		\error(\psi_\sol|\vec{y}) = \risk_{\psi_\sol}^\uni\big(\vec{y}\big) \leq \falsect_\turing^\uni(\vec{y}).
		\tag{E}
	\end{equation}		
\end{thm_uni}

\begin{proof}
	By Lemma~\ref{t:prisk_to_risk}, the predictive risk and risk coincide for universal induction: $\error^\uni(\psi|\vec{y}) = \risk_{\psi}^\uni\big(\vec{y}\big)$. 

	By Proposition~\ref{t:kolmogorov}, the hard falsifiability of $\vec{y}$ coincides with (the negative logarithm of) the Solomonoff prior
	\begin{equation*}
		\falsect^\uni_\turing(\vec{y}) = -\log\probq_\sol(\vec{y}).
	\end{equation*}
	Finally, the result follows by Solomonoff's Theorem~\ref{t:solomonoff}.
\end{proof}

More generally, Theorem~E suggests that Bayesian updating is a way of modifying theories, whose cost (measured in errors) can be bounded using falsifiability.

We conclude by relating falsifiability to Kolmogorov complexity. Intuitively, a string is simple if it is the output of a short computer program. More formally, 

\begin{defn}[Kolmogorov complexity]
	\label{d:kolmogorov}
	The \textbf{Kolmogorov complexity} of a string, with respect to Turing machine $\turing$, is the length of the shortest program that outputs the string as a prefix \cite{kolmogorov:65}:
	\begin{equation*}
		\kolmog_\turing(\vec{y}) := \min_{\vec{x}\in \X}\big\{\len(\vec{x}) \,\big|\, \turing(\vec{x}) = \vec{y}\bullet\big\}
	\end{equation*}
\end{defn}

The Kolmogorov complexity $\kolmog_\turing$ depends on the choice of Turing machine up to an additive constant that does not depend on $\vec{y}$ \cite{li:2008}.

\begin{prop}[relation between falsifiability and Kolmogorov complexity]
	Falsifiability lower bounds Kolmogorov complexity:
	\begin{equation*}
		\falsect_\turing^\uni(\vec{y}) \leq \kolmog_\turing(\vec{y}).
	\end{equation*}
	Further, $\falsect^\uni_\turing(\vec{y}) = \kolmog_\turing(\vec{y})$ up to an additive constant that does not depend on $\vec{y}$.
\end{prop}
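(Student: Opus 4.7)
The plan is to reduce the claim to a statement about the Solomonoff prior using Proposition~\ref{t:kolmogorov}, which identifies $\falsect^\uni_\turing(\vec{y}) = -\log\probq_\sol(\vec{y})$. Once we have this identification, the inequality $\falsect^\uni_\turing(\vec{y}) \leq \kolmog_\turing(\vec{y})$ becomes equivalent to the statement $\probq_\sol(\vec{y}) \geq 2^{-\kolmog_\turing(\vec{y})}$, which is immediate from the definition. The harder direction is the matching lower bound, which is Levin's coding theorem in disguise.

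For the inequality, I would unpack the definition of the Solomonoff prior from Definition-Proposition~\ref{t:limit}:
\begin{equation*}
    \probq_\sol(\vec{y}) = \sum_{\{\vec{x} \mid \turing(\vec{x}) = \vec{y}\bullet\}} 2^{-\len(\vec{x})}.
\end{equation*}
By definition of Kolmogorov complexity, there exists a program $\vec{x}^*$ with $\len(\vec{x}^*) = \kolmog_\turing(\vec{y})$ such that $\turing(\vec{x}^*) = \vec{y}\bullet$. This single program contributes $2^{-\kolmog_\turing(\vec{y})}$ to the sum, so $\probq_\sol(\vec{y}) \geq 2^{-\kolmog_\turing(\vec{y})}$. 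Taking negative logarithms and combining with Proposition~\ref{t:kolmogorov} yields the claimed inequality.

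For the equality up to an additive constant, I would invoke Levin's coding theorem, which states that $-\log\probq_\sol(\vec{y}) = \kolmog_\turing(\vec{y}) + O(1)$ in the prefix-free setting (see e.g.\ \cite{li:2008}). The proof idea is to construct, from the universal semimeasure $\probq_\sol$, a prefix-free code via (a computable approximation of) Shannon--Fano--Elias coding: since $\probq_\sol$ is a lower-semicomputable semimeasure and $\turing$ is universal among prefix Turing machines, one can build a program that, given a code for $\vec{y}$ of length $\lceil -\log\probq_\sol(\vec{y})\rceil + O(1)$, emits $\vec{y}$. The $O(1)$ overhead encodes the fixed decoder and does not depend on $\vec{y}$.

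The main obstacle is the second direction: unlike the first inequality, which is a one-line counting argument, the matching lower bound requires nontrivial algorithmic information theory. The delicacy lies in establishing that $\probq_\sol$ is itself a (universal) lower-semicomputable semimeasure — a fact that depends essentially on using a \emph{prefix-free} universal Turing machine in the setup of Section~4 — and then in carrying out the coding construction uniformly so that the overhead is a single additive constant independent of $\vec{y}$. Since this is a classical result, I would cite \cite{li:2008} rather than reprove it, and focus the exposition on interpreting the statement within the falsifiability framework developed in the paper.
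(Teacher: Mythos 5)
Your proposal is correct and follows essentially the same route as the paper: identify $\falsect^\uni_\turing(\vec{y})$ with $-\log\probq_\sol(\vec{y})$ via Proposition~\ref{t:kolmogorov}, obtain the inequality directly from the definitions of the Solomonoff prior and Kolmogorov complexity, and invoke Levin's coding theorem from \cite{li:2008} for the equality up to an additive constant. You merely spell out the one-program counting argument and the coding-theorem sketch that the paper leaves implicit.
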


\begin{proof}
	The inequality follows from the definitions of the Solomonoff prior and Kolmogorov complexity. 

	By Levin's coding theorem \cite{li:2008}, the Kolmogorov complexity of a string coincides with the negative log probability of the string according to the Solomonoff prior up to an additive constant.
\end{proof}

\subsection{Proofs ($\uni$)}
\label{s:uni_proofs}

Equip $\hypotheses^n$ with the uniform distribution and let $\prob_{\cS^n}(\X)$ denote the $\cS^n$-induced  distribution on $\X$. Recall that we defined the Solomonoff prior as the limit of the $\turing\circ \cS^n$-induced distribution on $\Y$ 
\begin{equation*}
	\probq_\sol(\vec{y}) := \lim_{n\rightarrow\infty} \prob_{\turing\circ\cS^n}(\vec{y}),		
\end{equation*}
where $\turing\circ\cS^n:\hypotheses^n\xrightarrow{\cS^n}\X\cup\{\emptyset\}\xrightarrow{\turing}\Y\cup\{\emptyset\}$.

\vspace{5mm}
\noindent
\textbf{Definition-Proposition~\ref{t:limit}.}
\emph{
	The following hold:
	\begin{enumerate}
		\item The limit $\prob_\cS(\X) := \lim_{n\rightarrow \infty}\prob_{n}(\X)$ is well-defined with
		\begin{equation*}
			\prob_\cS(\vec{x}) = 2^{-\len(\vec{x})}.
		\end{equation*}
		\item The limit $\probq_{\sol}(\Y):=\prob_{\turing \circ \cS}(\Y) = \lim_{n\rightarrow \infty}\prob_{\turing\circ \cS^n}(\Y)$ is well-defined and coincides with the Solomonoff prior. That is,
		\begin{equation*}
			\probq_\sol(\vec{y}) =  \sum_{\{\vec{x}\in\X|\turing(\vec{x}) = \vec{y}\bullet\}} 2^{-\len(\vec{x})}.
		\end{equation*}
	\end{enumerate}
}
\begin{proof}
	\begin{enumerate}[\emph{Claim} 1.]
		\item By Lemma~\ref{t:finite_det}, the induced probability of a valid program is 
		\begin{equation*}
			\prob_{\cS^n}(\vec{x}) = \begin{cases}
				\sum_{\vec{x}\vec{s}} \frac{1}{2^n} 
				= \frac{2^{n-\len(\vec{x})}}{2^n} = 2^{-\len(\vec{x})}
			& \text{if }\len(\vec{x})\leq n\\
			0 & \text{else.}
			\end{cases}		
		\end{equation*}
		Thus, $\lim_{n\rightarrow\infty} \prob_{\cS^n}(\vec{x}) = 2^{-\len(\vec{x})}$ for all valid programs.
		\item Also by Lemma~\ref{t:finite_det}.
	\end{enumerate}	
\end{proof}

Recall that the standard definition of Solomonoff induction is as the strategy:
\begin{equation*}
	(\psi_\sol)_t:Y^{t-1}\rightarrow \Delta_Y:y_{1:t-1}
	\mapsto \probq_\sol(y_t|y_{1:t-1})
	:= \frac{\probq_\sol(y_{1:t})}{\probq_\sol(y_{1:t-1})}.
\end{equation*}

\vspace{5mm}
\noindent
\textbf{Definition-Proposition~\ref{t:sol_strategy}.}
\emph{	
	The two definitions of Solomonoff induction coincide:
	\begin{equation*}
		\lim_{n\rightarrow \infty} \probq_{n,t}(y_t) 
		= \frac{\probq_\sol(y_{1:t})}{\probq_\sol(y_{1:t-1})}.
	\end{equation*}
}

\begin{proof}
	The theory $\theory^n_t$ is the set of all strings of length $\leq n$ consistent with the observation $y_{1:t-1}$. Pushing the uniform distribution on $\hypotheses^n$ forward onto $Y^\infty$ yields, asymptotically, the conditional Solomonoff distribution.
\end{proof}

\begin{lem}[predictive risk reduces to risk]\label{t:prisk_to_risk}
	If $\vec{y}$ is computable then
	\begin{equation*}
		\error^\uni(\psi|\vec{y}) 
		:= \lim_{n\rightarrow \infty}\error^\uni_\psi(\theory^n|\vec{y}) 
		= \risk_\psi^\uni\big(\vec{y}\big).
	\end{equation*}
\end{lem}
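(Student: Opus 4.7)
The plan is to observe that $\lim_{n\to\infty}\risk^\uni_{\theory^n}(\vec{y})=0$ whenever $\vec y$ is computable, so the comparator in the predictive risk eventually vanishes and $\error^\uni_\psi(\theory^n|\vec y)$ coincides with $\risk^\uni_\psi(\vec y)$ for all sufficiently large $n$.

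First, I would use computability of $\vec y$ to pick a witness. Since $\vec y\in\Y=\turing(\X)$, there exists a valid program $\vec{x}^{\ast}\in\X$ with $\turing(\vec{x}^{\ast})=\vec{y}\bullet$; set $N:=\len(\vec{x}^{\ast})$. Next, for any $n\geq N$, I would produce an element of $\theory^n$ that explains $\vec y$ with zero error. By the padding construction in Section~\ref{s:uni}, every extension $\vec{f}=\vec{x}^{\ast}\vec{s}$ with $\len(\vec{s})=n-N$ is a valid padded program of length $n$, hence lies in $\theory^n$. Because $\turing$ ignores the padding, $\turing(\vec{f})=\turing(\vec{x}^{\ast})=\vec{y}\bullet$, so $\loss\big(\turing(\vec{f})_t,y_t\big)=0$ for every index $t$ at which $y_t$ is defined. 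Taking the infimum in the definition of the risk of a theory yields
\begin{equation*}
\risk^\uni_{\theory^n}(\vec y)\;=\;\inf_{\vec{f}\in\theory^n}\sum_{t=1}^{\infty}\loss\big(\turing(\vec{f})_t,y_t\big)\;=\;0\qquad\text{for all }n\geq N.
\end{equation*}

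Substituting into Definition~B gives $\error^\uni_\psi(\theory^n|\vec y)=\risk^\uni_\psi(\vec y)-\risk^\uni_{\theory^n}(\vec y)=\risk^\uni_\psi(\vec y)$ for every $n\geq N$, and the limit as $n\to\infty$ is then immediate.

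There is really no hard step: the main point to be careful about is the bookkeeping between the three layers $\hypotheses^n$, $\theory^n$, and $\X^n$, so that the padded version of $\vec{x}^{\ast}$ genuinely lies in $\theory^n$ and its output genuinely agrees with $\vec y$ on every coordinate. If one were to drop the assumption that $\vec y$ is computable this argument would fail, because no finite program could reduce the infimum to zero and one would only be able to bound $\risk^\uni_{\theory^n}(\vec y)$ by the best truncated approximation of $\vec y$; the lemma is therefore tight in its hypothesis.
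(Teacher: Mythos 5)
Your proposal is correct and follows the same route as the paper's (one-line) proof: since $\vec{y}$ is computable, the theory $\theory^n$ eventually contains a program computing $\vec{y}$, so $\risk^\uni_{\theory^n}(\vec{y})$ vanishes and only $\risk^\uni_\psi(\vec{y})$ survives in the limit. You merely make explicit the witness program, the padding bookkeeping, and the fact that the comparator is exactly zero for all $n\geq N$, which the paper leaves implicit.
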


\begin{proof}
	As $n\rightarrow\infty$, the theory incorporates all valid programs, and so can match any computable sequence. Thus,
	\begin{equation*}
		\lim_{n\rightarrow \infty}\risk^\uni_{\theory^n}(\vec{y}) = 0
	\end{equation*}
	and the result follows.
\end{proof}

\begin{prop}[hard falsifiability and Solomonoff prior]
	\label{t:kolmogorov}
	The hard falsifiability of string $\vec{y}$ for Turing machine $\turing$ is
	\begin{equation*}
		\falsect_\turing^\uni(\vec{y}) 
		= -\log \probq_\sol(\vec{y}).
	\end{equation*}
\end{prop}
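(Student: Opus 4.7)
The plan is to unwind both sides into preimage counts on $\hypotheses^n$ and then take $n\to\infty$. By Lemma~\ref{t:finite_det} applied to the function $\risk^n_{\vec{y}}:\hypotheses^n\to\bR_{\geq0}$ (with the uniform prior on $\hypotheses^n$, of size $2^n$), I have
\begin{equation*}
	\information(\risk^n_{\vec{y}},0) \;=\; -\log \prob_{\risk^n_{\vec{y}}}(0) \;=\; -\log\frac{\big|(\risk^n_{\vec{y}})^{-1}(0)\big|}{2^n}
\end{equation*}
provided $0$ lies in the image, i.e.\ some padded string of length $n$ exactly reproduces $\vec{y}$ through $\turing$. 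So the task reduces to computing $|(\risk^n_{\vec{y}})^{-1}(0)|/2^n$ and checking it agrees with $\probq_n(\vec{y})$ from Definition-Proposition~\ref{t:limit}.

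The key identification is the following. A padded string $\vec{h}\in\hypotheses^n$ satisfies $\risk^n_{\vec{y}}(\vec{h})=0$ iff $\turing(\vec{h})$ matches $\vec{y}$ on every coordinate, and since $\turing$ ignores padding this is equivalent to $\cS^n(\vec{h})$ being a valid program $\vec{x}\in\X^n$ with $\turing(\vec{x})=\vec{y}\bullet$. For each such valid program $\vec{x}$ of length $t\leq n$, the fibre $(\cS^n)^{-1}(\vec{x})$ contains exactly $2^{n-t}$ padded extensions. Summing the fibres therefore gives
\begin{equation*}
	\frac{|(\risk^n_{\vec{y}})^{-1}(0)|}{2^n}
	\;=\; \sum_{\{\vec{x}\in\X^n\,:\,\turing(\vec{x})=\vec{y}\bullet\}} \frac{2^{n-\len(\vec{x})}}{2^n}
	\;=\; \sum_{\{\vec{x}\in\X^n\,:\,\turing(\vec{x})=\vec{y}\bullet\}} 2^{-\len(\vec{x})},
\end{equation*}
which is exactly the $(\cS^n\circ\turing)$-induced probability $\probq_n(\vec{y})$ computed via the same bookkeeping used in the proof of Definition-Proposition~\ref{t:limit}.

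The final step is to pass to the limit. Combining the two displays,
\begin{equation*}
	\falsect^\uni_\turing(\vec{y}) \;=\; \lim_{n\to\infty}\information(\risk^n_{\vec{y}},0) \;=\; -\log \lim_{n\to\infty}\probq_n(\vec{y}) \;=\; -\log\probq_\sol(\vec{y}),
\end{equation*}
where the interchange of the logarithm and the limit is legitimate because $\probq_n(\vec{y})$ is monotone nondecreasing in $n$ (each new value of $n$ only adds further valid programs contributing positive weight) and bounded above by $1$, hence convergent to the Solomonoff prior defined in Proposition~\ref{t:limit}.

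The main obstacle is purely bookkeeping: being careful that $\hypotheses^n$ is treated as the $2^n$ length-$n$ strings (matching the normalization used in the proof of Proposition~\ref{t:limit}), and that ``$\risk^n_{\vec{y}}(\vec{h})=0$'' means $\turing(\vec{h})$ agrees with $\vec{y}$ as a prefix — this is the same notion of agreement ``$\turing(\vec{x})=\vec{y}\bullet$'' used in the Solomonoff prior. Once the two conventions are lined up, the proof is a one-line identification of preimage counts with the unnormalized Solomonoff weights, exactly parallel to Propositions~\ref{t:infgain_iid} and~\ref{t:falsect_seq}.
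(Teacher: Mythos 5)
Your argument is correct, and it reaches the identity by a somewhat different route than the paper. The paper's proof factorizes the limiting risk through the Turing machine, $\risk^\uni_{\vec{y}} = (\sum\loss)\circ\turing$, notes that $\risk_{\vec{y}}^{-1}(0)=\turing^{-1}(\vec{y})$, and then computes the Bayesian information gain $\information(\turing,\vec{y})$ directly as a Kullback--Leibler divergence on the (infinite) program space $\X$ equipped with the limit prior $\prob_\cS$ from Definition-Proposition~\ref{t:limit}; since $\turing$ is deterministic, Bayes' rule collapses the divergence to $-\log\prob_\turing(\vec{y})=-\log\probq_\sol(\vec{y})$. You instead stay at finite $n$ throughout: Lemma~\ref{t:finite_det} applies literally (uniform prior on the $2^n$ padded strings), the zero-risk preimage is counted fibre-by-fibre through $\cS^n$ to give exactly $\probq_n(\vec{y})$, and the limit in the definition of $\falsect^\uni_\turing$ is handled explicitly by monotone convergence of the partial Kraft sums. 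What your route buys is fidelity to the definition $\falsect^\uni_\turing(\vec{y})=\lim_n\information(\risk^n_{\vec{y}},0)$ and to the finite-set setting in which Lemma~\ref{t:finite_det} is actually stated -- the interchange of limit and information gain, which the paper's two claims pass over silently, is justified in your version -- at the cost of the bookkeeping about padding and the prefix convention $\turing(\vec{x})=\vec{y}\bullet$, which you flag correctly (and which mirrors the looseness already present in Definition-Proposition~\ref{t:limit}). The paper's route is slicker and makes the conceptual identity $\falsect^\uni_\turing(\vec{y})=\information(\turing,\vec{y})$ visible, but it implicitly extends the information-gain computation to an infinite domain with a non-uniform prior; your finite-$n$ counting argument avoids that extension entirely. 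Both are sound; yours is the more careful elementary rendering, the paper's the more structural one.
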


\begin{proof}
	Observe that the risk factorizes as
	\begin{equation*}
		\begin{matrix}
			\risk^\uni_{\vec{y}}: & \X & \xrightarrow{\turing} & \Y & \xrightarrow{\sum\loss} & \bR \\
			& \vec{x} & \mapsto & \turing(\vec{x}) & \mapsto & \sum_{t=1}^\infty \loss\big(\turing(\vec{x})_t,y_t\big).
		\end{matrix}		
	\end{equation*}
	The proposition follows from the following two claims.
	
	\begin{enumerate}[\emph{Claim} 1.]
		\item \emph{$\information(\turing, \vec{y}) = -\log \probq_\sol(\vec{y})$ for all $\vec{y}\in \Y$.}

		Consider the function $\turing:\X\rightarrow Y$, where $\X$ is equipped with the distribution $\prob_\cS(\X)$ from Proposition~\ref{t:limit}. Since Turing machines are deterministic, we have that $\prob_\turing(\vec{y}|\vec{x})=1$, and so
		\begin{equation*}
			\prob_\turing(\vec{x}|\vec{y}) 
			= \prob_\turing(\vec{y}|\vec{x}) \cdot \frac{\prob_{\cS}(\vec{x})}{\prob_\turing(\vec{y})} 
			= \frac{\prob_{\cS}(\vec{x})}{\prob_\turing(\vec{y})} 
		\end{equation*}
		It follows that
		\begin{align*}
			\information(\turing, \vec{y})
			& = \kl\Big[\prob_\turing(\X|\vec{y})\,\Big\|\, \prob_{\cS}(\X)\Big] 
			= \sum_{\vec{x}\in \X} \prob_\turing(\vec{x}|\vec{y}) \log \frac{\prob_\turing(\vec{x}|\vec{y})}{\prob_{\cS}(\vec{x})} \\
			& = \sum_{\vec{x}\in \X} \prob_\turing(\vec{x}|\vec{y}) \log \frac{\prob_\turing(\vec{y}|\vec{x})\cdot \prob_{\cS}(\vec{x})}{\prob_\turing(\vec{y})\cdot \prob_{\cS}(\vec{x})} 
			= \sum_{\vec{x}\in \X} \prob_\turing(\vec{x}|\vec{y}) \log \frac{1}{\prob_\turing(\vec{y})} \\
			& = - \log \prob_\turing(\vec{y}) \\
			& = - \log \probq_\sol(\vec{y}).
		\end{align*}
		where the last equality follows from Proposition~\ref{t:limit}.

		\item $\falsect^\uni(\vec{y}) = \information(\turing,\vec{y})$.

		Follows from $\falsect^\uni(\vec{y}) = \information(\risk_{\vec{y}},0)$ and $\risk_{\vec{y}}^{-1}(0)=\turing^{-1}(\vec{y})$.
	\end{enumerate}	
	Concatenating the claims yields the desired result.
\end{proof}

\begin{thm}[generalization bound for Solomonoff induction]\label{t:solomonoff}
	\begin{equation*}
		\sum_{t=1}^\infty \expec \loss\Big(\psi_\sol(y_{1:t-1}),y_t\Big) 
		\leq -\log \probq_\sol(\vec{y}).
	\end{equation*}		
\end{thm}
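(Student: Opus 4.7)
The plan is to reduce the theorem to a per-step inequality about the Solomonoff conditional probability and then telescope. First I would unpack the expected loss at each round. Since $\psi_\sol$ at round $t$ outputs the Bernoulli distribution $\probq_\sol(\cdot\mid y_{1:t-1})$ on $Y=\{0,1\}$, and the 0/1 loss is incurred precisely when the sampled bit disagrees with the true $y_t$, we have
\begin{equation*}
  \expec\loss\bigl(\psi_\sol(y_{1:t-1}),y_t\bigr)
  = 1 - \probq_\sol\bigl(y_t\,\big|\,y_{1:t-1}\bigr).
\end{equation*}
This is the only place where the specifics of $\psi_\sol$ enter; from here the argument is purely analytic.

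Next I would apply the elementary inequality $1 - x \leq -\log x$ valid for all $x\in(0,1]$ (in base 2; it follows from convexity of $-\log$ and tangent-line bound at $x=1$, noting $1/\ln 2 > 1$). Applied pointwise, this yields
\begin{equation*}
  \sum_{t=1}^\infty \expec\loss\bigl(\psi_\sol(y_{1:t-1}),y_t\bigr)
  \leq -\sum_{t=1}^\infty \log \probq_\sol\bigl(y_t\,\big|\,y_{1:t-1}\bigr).
\end{equation*}
The right-hand side telescopes via the chain rule $\probq_\sol(\vec y) = \prod_{t=1}^\infty \probq_\sol(y_t\mid y_{1:t-1})$, giving exactly $-\log \probq_\sol(\vec y)$.

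The main obstacle is not the algebra but ensuring the chain rule is justified in this setting, where $\vec y$ is an infinite computable sequence and $\probq_\sol$ is defined as a limit via Definition-Proposition~\ref{t:limit}. One has to check that the conditional probabilities $\probq_\sol(y_t\mid y_{1:t-1})=\probq_\sol(y_{1:t})/\probq_\sol(y_{1:t-1})$ are strictly positive for all $t$ (which is immediate for any computable prefix, since at least one program outputting $\vec y$ contributes) so that the logs are well-defined, and that the infinite product converges to $\probq_\sol(\vec y)$. Both follow from the standard chain-rule identity for the Solomonoff prior combined with the fact that $\probq_\sol(y_{1:t})$ is non-increasing and bounded below by $\probq_\sol(\vec y)>0$ when $\vec y\in\Y$. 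With those checks the bound is established.
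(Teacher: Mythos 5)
Your proposal is correct and follows essentially the same route as the paper's proof (which is the standard argument from Hutter): rewrite each round's expected loss as $1-\probq_\sol(y_t\mid y_{1:t-1})$, apply $1-x\leq -\log x$, and telescope via the chain rule for $\probq_\sol$. The extra care you take about positivity of the conditionals and convergence of the product is a welcome refinement but does not change the argument.
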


\begin{proof}
	The following proof is taken from \cite{hutter:11}:
	\begin{align*}
		\sum_{t=1}^\infty \expec \loss\Big(\psi_\sol(y_{1:t-1}),y_t\Big)
		& = \sum_{t=1}^\infty \big|1 - \probq_\sol(y_t | y_{1:t-1})\big|\\
		& \leq - \sum_{t=1}^\infty \log \probq_\sol(y_t|y_{1:t-1})\\
		& = -\log \probq_\sol(\vec{y}),
	\end{align*}
	where the inequality holds because $1-x\leq -\log x$.
\end{proof}

\subsection{Interpreting Solomonoff induction as a universal theory}
\label{s:uni_std}

Under the standard interpretation, Forecaster's theory is $\theory$ and $\falsect^\uni_\turing(\vec{y})$ counts the hypotheses falsified by Nature whilst choosing $\vec{y}$:
\begin{align*}
	 \falsect_\turing^{\uni}(\vec{y}) 
	 & = \lim_{n\rightarrow \infty}
	 \Big[ \log\big\{\textrm{\# strings of length $n$}\big\}
	- \log\big\{\textrm{\# that output $y$}\big\}\Big] \\
	& = \lim_{n\rightarrow \infty} \Big\{\textrm{$\log$-\# strings of length $n$ that Nature falsifies }\Big\}.
\end{align*}

\section{Discussion}
\label{s:discussion}

{\footnotesize
\begin{quote}
	\emph{[A] theory of induction is superfluous. It has no function in a logic of science. The best we can say of a hypothesis\footnote{This paper uses `theory' in the sense that Popper uses `hypothesis'.} is that up to now it has been able to show its worth, and that it has been more successful than other hypotheses although, in principle, it can never be justified, verified, or even shown to be probable. This appraisal of the hypothesis relies solely upon deductive consequences (predictions) which may be drawn from the hypothesis: There is no need even to mention `induction'.}

	\hfill -- from \cite{popper:59}. 
\end{quote}
}

We conclude by discussing the paper's implications for scientific inference, focusing on the ideas of Karl Popper. According to Popper, inductive inference is meaningless. As an alternative, he advocated \emph{hypothetico-deductive inference}, which proceeds as follows \cite{gelman:13}.

Forecaster makes observations, proposes a theory, and deduces consequences. A theory is scientific if it is \emph{falsifiable}. That is, if it is possible to deduce empirically testable consequences. The scientific method, according to Popper, is: to propose falsifiable theories that are in line with past observations; to subject them to severe empirical tests; and to discard and replace them if and when they are falsified. 

Popper's ideas are extremely influential in the scientific community. Indeed, he is essentially the only philosopher that scientists draw on as a resource to evaluate and compare theories. Philosophers, however, consider Popper's approach to be fundamentally flawed \cite{godfrey:11}. The three main problems that have been identified are:
\begin{enumerate}[P1.]
	\item \emph{Infinite alternatives.}
	The set of imaginable hypotheses is infinite, so that it is trivial to find a collection of specific hypotheses that a specific theory falsifies. 
	\item \emph{Stochasticity.}
	It is unclear how to apply Popper's ideas to stochastic theories, which cannot be definitely falsified. 
	\item \emph{No confirmation.}
	Popper rejected the notion that positive evidence should increase our confidence in a scientific theory. Rejecting confirmation eliminates any rationale, aside from habit, for using a well-tested theory over a brand new theory, assuming both are falsifiable. 
\end{enumerate}

Our formulation of falsifiability does not exactly line up with what Popper had in mind. We proceed regardless.

Problem \emph{P1} is solved by restricting attention to the finite set of effective hypotheses. Problem \emph{P2} is also solved as a corollary of our results. Soft and hard falsifiability are defined with respect to \emph{deterministic} hypotheses, whereas the predictive risk allows \emph{probabilistic} hypotheses.

Problem \emph{P3} is more interesting. If Nature is \emph{i.i.d.} then Theorem~D'' provides a guarantee on a predictor's future accuracy that depend on the theory's falsifiability and the predictor's past performance. Thus, with the addition of the \emph{i.i.d.} assumption, there \emph{is} quantifiable confirmation. 

If no assumptions are made about Nature's behavior, then the setting is sequential prediction. The most that can be said is that, if a theory is falsifiable, then its predictive performance can be as good as its explanatory performance in hindsight. Nothing \emph{absolute} can be said about predictive performance \emph{a priori}.

Finally, Solomonoff induction is purported to be a (non-computable) theory that optimally explains and predicts every computable string. However, observe that Theorem~E says \emph{nothing} about Solomonoff induction's predictive performance unless $\falsect^\uni_\turing(\vec{y})$ or the Kolmogorov complexity $\kolmog_\turing(\vec{y})$ are known \emph{a priori} -- which is never the case. For example, suppose Nature picks a string that contains $10^9$ zeros followed by $10^9$ coin flips, followed by only zeros. Solomonoff induction's error rate on the first billion instances will not be indicative of its performance on the next billion. Assuming that Nature chooses strings with low Kolmogorov complexity is analogous to, albeit weaker than, assuming Nature is \emph{i.i.d.}

The current state-of-the-art in learning theory therefore supports Popper's intuitions about falsifiability -- including his rejection of confirmation. In a more positive vein, learning theory suggests that inductive inference requires additional assumptions and provides tools for analyzing their implications.

\vspace{3mm}\noindent\textbf{Acknowledgments.} I am grateful to Samory Kpotufe, Jacob Abernethy and Pedro Ortega for useful discussions.

{\footnotesize

}

\end{document}